\documentclass{article}


\usepackage[nonatbib,preprint]{neurips_2024}
\usepackage{breakurl}




\usepackage[utf8]{inputenc} 
\usepackage[T1]{fontenc}    
\usepackage{hyperref}       
\usepackage{url}            
\usepackage{booktabs}       
\usepackage{amsfonts}       
\usepackage{nicefrac}       
\usepackage{microtype}      
\usepackage{xcolor}         

\usepackage[square,sort&compress,comma,numbers]{natbib}
\bibliographystyle{unsrtnat}

\usepackage{amsmath}
\usepackage{amssymb}
\usepackage{mathtools}
\usepackage{amsthm}

\usepackage[capitalize,noabbrev]{cleveref}

\theoremstyle{plain}
\newtheorem{theorem}{Theorem}[section]

\newtheorem{lemma}[theorem]{Lemma}

\theoremstyle{definition}
\newtheorem{definition}[theorem]{Definition}

\theoremstyle{remark}
\newtheorem{remark}[theorem]{Remark}

\theoremstyle{plain}

\usepackage[commandRef=cref]{proof-at-the-end}
\newEndThm[end, text link=]{theoremE}{theorem}
\newEndThm[end, text link=]{corollaryE}{corollary}
\newEndThm[end, text link=]{statementE}{statement}
\newEndThm[end, text link=]{lemmaE}{lemma}
\newEndThm[end, text link=]{remarkE}{remark}
\newEndProof{proofE}

\usepackage[shell, subfolder]{gnuplottex}
\usepackage{gnuplot-lua-tikz}

\usepackage{caption}
\usepackage{subcaption}

\usepackage{makecell}

\usepackage{algorithm}
\usepackage{algorithmic}

\def\PDF{p}
\def\defeq{\triangleq}

\DeclareMathOperator{\proba}{\mathbb{P}}
\DeclareMathOperator{\expect}{\mathbb{E}} 
\DeclareMathOperator{\cov}{\textnormal{cov}}   
\DeclareMathOperator{\supp}{\textnormal{supp}} 
\DeclareMathOperator{\diag}{diag}


\def\limproba{\underset{N \to \infty}{\overset{\proba}{\longrightarrow}}}
\def\almosteq{\overset{\textnormal{a.s.}}{=}}

\DeclareMathOperator{\PMI}{\textnormal{PMI}}

\DeclareMathOperator*{\DKLoperator}{D_{\textnormal{KL}}}
\newcommand{\DKL}[2]{\DKLoperator \left( #1 \, || \, #2 \right)}

\def\loglikelihood{\mathcal{L}}


\def\reals{\mathbb{R}}

\def\normal{\mathcal{N}}
\def\uniform{\textnormal{U}}

\newcommand\undermat[2]{%
    \makebox[0pt][l]{$\smash{\underbrace{\phantom{%
        \begin{matrix}#2\end{matrix}}}_{\text{$#1$}}}$
    }#2%
}


\DeclareMathOperator{\arcsinh}{arcsinh}

\title{Mutual Information Estimation via Normalizing Flows}

%

\author{
Butakov I.~D. \\
Skoltech\thanks{Skolkovo Institute of Science and Technology},
MIPT\thanks{Moscow Institute of Physics and Technology},
Sirius\thanks{Sirius University of Science and Technology},\\
\texttt{butakov.id@phystech.edu} \\
\And
Tolmachev A.~D.\\
Skoltech, MIPT\\
\texttt{tolmachev.ad@phystech.edu} \\
\And
Malanchuk S.~V. \\
Skoltech, MIPT \\
\texttt{malanchuk.sv@phystech.edu} \\
\And
Neopryatnaya A.~M. \\
Skoltech, MIPT \\
\texttt{neopryatnaya.am@phystech.edu} \\
\And
Frolov A.~A. \\
Skoltech \\
\texttt{al.frolov@skoltech.ru} \\
}

\begin{document}

\maketitle

\begin{abstract}
We propose a novel approach to the problem of \emph{mutual information} (MI) estimation via introducing a family of estimators based on normalizing flows. The estimator maps original data to the target distribution, for which MI is easier to estimate. We additionally explore the target distributions with known closed-form expressions for MI. Theoretical guarantees are provided to demonstrate that our approach yields MI estimates for the original data. Experiments with high-dimensional data are conducted to highlight the practical advantages of the proposed method.
\end{abstract}

\section{Introduction}\label{section:introduction}

Information-theoretic analysis of deep neural networks (DNN) has attracted recent interest due to intriguing fundamental results and new hypothesis.
Applying information theory to DNNs may provide novel tools for explainable AI via estimation of information flows~\cite{tishby2015bottleneck_principle, xu2017IT_analysis, goldfeld2019estimating_information_flow, abernethy2020reasoning_conditional_MI, kairen2018individual_neurons},
as well as new ways to encourage models to extract and generalize information~\cite{tishby2015bottleneck_principle, chen2016infogan, belghazi2018mine, ardizzone2020training_normflows}.
Useful applications of information theory to the classical problem of independence testing are also worth noting~\cite{berrett2017independence_testing, sen2017conditional_independence_test, duong2023normflows_for_conditional_independence_testing}.

Most of the information theory applications to the field of machine learning are based on the two central information-theoretic quantities: \emph{differential entropy} and \emph{mutual information} (MI).
The latter quantity is widely used as an invariant measure of the non-linear dependence between random variables,
while differential entropy is usually viewed as a measure of randomness.
However, as it has been shown in the previous works~\cite{goldfeld2020convergence_of_SEM_entropy_estimation, mcallester2020limitations_MI}, MI and differential entropy are extremely hard to estimate in the case of high-dimensional data.
It is argued that such estimation is also challenging for long-tailed distributions and large values of MI~\cite{czyz2023beyond_normal}.
These problems considerably limit the applications of information theory to real-scale machine learning problems.
However, recent advances in the neural estimation methods show that complex parametric estimators achieve relative practical success in the cases where classical MI estimation techniques fail~\cite{belghazi2018mine, oord2019representation_learning_CPC, song2020understanding_limitations, rhodes2020telescoping, Ao_Li_2022entropy_estimation_normflows, butakov2024lossy_compression, franzese2024minde}.

This paper addresses the mentioned problem of the mutual information estimation in high dimensions via using normalizing flows~\cite{tabak2010PDF_estimators_LL_ascend, tabak2013nonparametric_PDF_estimators, dinh2015NICE, rezende2015variational_inference_normflows}.
Some recent works also utilize generative models to estimate MI.
According to a general generative approach described in~\cite{song2020understanding_limitations}, generative models can be used to reconstruct probability density functions (PDFs) of marginal and joint distributions to estimate differential entropy and MI via a Monte Carlo (MC) integration.
However, as it is mentioned in the original work, when flow-based generative models are used, this approach yields poor estimates even when the data is of simple structure.
This approach is further investigated in~\cite{duong2023dine}.
The estimator proposed in~\cite{franzese2024minde} uses score-based diffusion models to estimate the differential entropy and MI without an explicit reconstruction of the PDFs.
Increased accuracy of this estimator comes at a cost of training score networks and using them to compute an MC estimate of Kullback–Leibler divergence (KLD).
Finally, in~\cite{duong2023normflows_for_conditional_independence_testing} normalizing flows are used to transform marginal distributions into Gaussian distributions, after which a zero-correlation criterion is employed to test the zero-MI hypothesis.
The same idea is later used in~\cite{duong2023dine} (see DINE-Gaussian) to acquire an MI estimate, but no corresponding error bounds are possible to derive, as knowing marginal distributions only is insufficient to calculate the MI (see~\cref{remark:MI_after_Gaussianization_bounds_tightness}), which makes this estimator substantially flawed.
We also note the work,
where normalizing flows are combined with a $ k $-NN entropy estimator~\cite{Ao_Li_2022entropy_estimation_normflows}.

In contrast, our method allows for simplified (cheap and low-variance MC integration is required) or even \emph{direct} (i.e., no MC integration, nearest neighbors search or other similar data manipulations are required) and the accurate MI estimation with asymptotic and non-asymptotic error bounds.
Our contributions in this work are the following:
\begin{enumerate}
    \item
    We propose a MI-preserving technique to simplify the joint distribution of two random vectors (RVs) via a Cartesian product of trainable normalizing flows in order to facilitate the MI estimation. Non-asymptotic error bounds are provided, with the gap approaching zero under certain commonly satisfied assumptions, showing that our estimator is consistent.
    \item
    We suggest restricting the proposed MI estimator to allow for a \emph{direct} MI calculation via a \emph{simple closed-form formula}.
    We further refine our approach to require only $ O(d) $ additional learnable parameters to estimate the MI (here $ d $ denotes the dimension of the data).
    We provide additional theoretical and statistical guarantees for our restricted estimator: variance and non-asymptotic error bounds are derived.
    \item
    We validate and evaluate our method via experiments with high-dimensional synthetic data with known ground truth MI.
    We show that the proposed MI estimator performs well in comparison to the ground truth and some other advanced estimators during the tests with high-dimensional compressible and incompressible data of various complexity.
\end{enumerate}

This article is organized as follows.
In \cref{section:preliminaries}, the necessary background is provided and the key concepts of information theory are introduced.
\cref{section:method_description} describes the general method and corresponding theoretical results.
In \cref{section:restricted_method} we restrict our method to allow for accurate MI estimation via a closed-form formula.
In \cref{section:experiments}, a series of experiments is performed to evaluate the proposed method and compare it to several other key MI estimators.
Finally, the results are discussed in \cref{section:discussion}.
We provide all the proofs in~\cref{appendix:proofs},
technical details in~\cref{appendix:technical_details}.


\section{Preliminaries}
\label{section:preliminaries}

Consider random vectors, denoted as $ X \colon \Omega \rightarrow \mathbb{R}^n $ and $ Y \colon \Omega \rightarrow \mathbb{R}^m $, where $ \Omega $ represents the sample space.
Let us assume that these random vectors are absolutely continuous, having probability density functions (PDF) denoted as $ \PDF(x) $, $ \PDF(y) $, and $ \PDF(x, y) $, respectively, where the latter refers to the joint PDF.
The differential entropy of $X$ is defined as follows:
\[
    h(X) = -\expect \log \PDF(x) = - \int\limits_{\mathclap{\supp{X}}} \PDF(x) \log \PDF(x) \, d x,
\]
where $ \supp{X} \subseteq \mathbb{R}^{n} $ represents the \emph{support} of $ X $, and $ \log(\cdot) $ denotes the natural logarithm.
Similarly, we define the joint differential entropy as $ h(X, Y) = -\expect \log \PDF(x, y) $ and conditional differential entropy as $ h(X \mid Y) = -\expect \log \PDF\left(X \middle| Y\right) = - \expect_{Y} \left(\expect_{X \mid Y = y} \log \PDF(X \mid Y = y) \right) $.
Finally, the mutual information (MI) is given by $ I(X; Y) = h(X) - h(X \mid Y) $, and the following equivalences hold
\begin{equation}
    \label{eq:mutual_information_from_conditional_entropy}
    I(X; Y) = h(X) - h(X\mid Y) = h(Y) - h(Y \mid X),
\end{equation}
\begin{equation}
    \label{eq:mutual_information_from_joined_entropy}
    I(X; Y) = h(X) + h(Y) - h(X, Y),
\end{equation}
\begin{equation}
    \label{eq:mutual_information_from_KullbackLeibler}
    I(X; Y)
    = \DKL{\PDF_{X,Y}}{\PDF_X \otimes \PDF_Y}
\end{equation}
Mutual information can also be defined as an expectation of the \emph{pointwise mutual information}:
\begin{equation}
    \label{eq:pointwise_mutual_information}
    \PMI_{X,Y}(x,y) = \log \left[ \frac{\PDF(x \mid y)}{\PDF(x)} \right], \quad I(X;Y) = \expect \PMI_{X,Y}(X, Y)
\end{equation}
The above definitions can be generalized via Radon-Nikodym derivatives and induced densities in case of distributions supports being manifolds, see~\cite{spivak1965calculus}.


The differential entropy estimation is a separate classical statistical problem.
Recent works have proposed several novel ways to acquire the estimate in the high-dimensional case~\cite{berrett2019efficient_knn_entropy_estimation, goldfeld2020convergence_of_SEM_entropy_estimation, butakov2021high_dimensional_entropy_estimation, Ao_Li_2022entropy_estimation_normflows, kristjan2023smoothed_entropy_PCA, adilova2023IP_dropout, franzese2024minde}.
Due to~\cref{eq:mutual_information_from_joined_entropy}, mutual information can be found by estimating entropy values separately.
In contrast, this paper suggests an approach that estimates MI values directly.

In our work, we rely on the well-known fundamental property of MI,
which is invariance under smooth injective mappings.
The following theorem appears in literature in slightly different forms~\cite{kraskov2004KSG, czyz2023beyond_normal, czyz2023pointwise_MI, butakov2024lossy_compression, polyanskiy2024information_theory}; we utilize the one, which is the most convenient to use with normalizing flows.
\begin{theoremE}
    \label{theorem:MI_under_nonsingular_mappings}
    Let $ \xi \colon \Omega \rightarrow \reals^{n'} $ be an absolutely continuous random vector, and let
    $ g \colon \mathbb{R}^{n'} \rightarrow \reals^n $ be an injective piecewise-smooth mapping with Jacobian $ J $,
    satisfying $ n \geq n' $ and
    $ \det \left( J^T J \right) \neq 0 $ almost everywhere.
    Let PDFs $ \PDF_\xi $ and $ \PDF_{\xi \mid \eta} $ exist.
    Then
    \begin{equation}
        \label{eq:MI_under_nonsingular_mapping}
        \PMI_{\xi,\eta}(a,b) \almosteq \PMI_{g(\xi),\eta}(g(a),b), \quad I(\xi; \eta) = I\left(g(\xi); \eta \right)
    \end{equation}
\end{theoremE}

\begin{proofE}
    For any function $ g $, let us denote $ \sqrt{\det \left( J^T(x) J(x) \right)} $ (area transformation coefficient) by $ \alpha(x) $ where it exists.
    
    Foremost, let us note that in both cases, $ \PDF_\xi(x \mid \eta) $ and $ \PDF_{g(\xi)}(x' \mid \eta) = \PDF_\xi(x \mid \eta) / \alpha(x) $ exist. Hereinafter, we integrate over $ \supp \xi \cap \left\{ x \mid \alpha(x) \neq 0 \right\} $ instead of $ \supp \xi $; as $ \alpha \neq 0 $ almost everywhere by the assumption, the values of the integrals are not altered.
    
    According to the definition of the differential entropy,
    \begin{align*}
        h(g(\xi)) &= -\int \frac{\PDF_\xi(x)}{\alpha(x)}\log\left(\frac{\PDF_\xi(x)}{\alpha(x)}\right)\alpha(x) \, dx = \\
        &= -\int \PDF_\xi(x) \log\left(\PDF_\xi(x)\right) dx + \int \PDF_\xi(x) \log\left(\alpha(x)\right) dx = \\
        & = h(\xi) + \expect \log \alpha(\xi).
    \end{align*}
    \begin{align*}
        h(g(\xi) \mid \eta) &= \expect_{\eta} \left( - \int \frac{\PDF_\xi(x \mid \eta)}{\alpha(x)} \log \left( \frac{\PDF_\xi(x \mid \eta)}{\alpha(x)} \right) \, \alpha(x) \, dx \right) = \\
        & = \expect_{\eta} \left( - \int \PDF_\xi(x \mid \eta) \log \left( \PDF_\xi(x \mid \eta) \right) dx + \int \PDF_\xi(x \mid \eta) \log \left( \alpha(x) \right) dx \right) = \\
        & = h(\xi \mid \eta) + \expect \log \alpha(\xi) \\
    \end{align*}
    Finally, by the MI definition,
    \[
        I(g(\xi); \eta) = h(g(\xi)) - h(g(\xi) \mid \eta) = h(\xi) - h(\xi \mid \eta) = I(\xi;\eta).
    \]

    Dropping the expectations/integrals in the equations above yields the proof of the $ \PMI $ invariance.
\end{proofE}

In our work, we heavily rely on the \emph{normalizing flows}~\cite{dinh2015NICE, rezende2015variational_inference_normflows}~-- trainable smooth bijective mappings with tractable Jacobian. However, to understand our results, it is sufficient to know that flow models (a) satisfy the conditions on $ g $ in \cref{theorem:MI_under_nonsingular_mappings} \emph{by definition}, (b) can model any absolutely continuous Borel probability measure (\emph{universality} property) and (c) are trained via a likelihood maximization, which is equivalent to a Kullback-Leibler divergence minimization.
For more details, we refer the reader to a more complete and rigorous overview of normalizing flows provided in~\cite{kobyzev2021normflows_overview}.

\section{General method}
\label{section:method_description}

Our task is to estimate $ I(X;Y) $, where $ X $, $ Y $ are random vectors.
Here we focus on the absolutely continuous $ (X,Y) $, as it is the most common case in practice.
Note that \cref{theorem:MI_under_nonsingular_mappings} allows us to train normalizing flows $ f_X, f_Y $, apply them to $ X $, $ Y $ and consider estimating MI between the latent representations,
as $ I(f_X(X); f_Y(Y)) = I(X;Y) $.

The key idea of our method is to train $ f_X $ and $ f_Y $ in such a way that $ I(f_X(X); f_Y(Y)) $ is easy to estimate.
For example, one can hope to acquire tractable pointwise mutual information (PMI), which can be then averaged via MC integration~\cite{czyz2023pointwise_MI}.
Unfortunately, the PMI invariance (\cref{theorem:MI_under_nonsingular_mappings}) restricts the possible distributions of $ (f_X(X), f_Y(Y)) $ to an unknown family,
making the exact MI recovery via such technique unfeasible.

However, one can always approximate the PDF in latent space via a (preferably, simple) model $ q \in \mathcal{Q} $ with tractable PMI, and train $ q $, $ f_X $ and $ f_Y $ to minimize the discrepancy between the real and the proposed PMI. The complexity of $ q $ serves as a tradeoff: by selecting a poor $ \mathcal{Q} $, one might experience a considerable bias of the estimate; on the other hand, choosing $ \mathcal{Q} $ to be a universal PDF approximation family, one acquires a consistent, but computationally expensive MI estimate. Flows $ f_X, f_Y $ are used to tighten the approximation bound. We formalize this intuition in the following theorems:

\begin{theoremE}
    \label{theorem:MI_approximation_decomposition}
    Let $ (\xi, \eta) $ be absolutely continuous with PDF $ \PDF_{\xi,\eta} $.
    Let $ q_{\xi,\eta} $ be a PDF defined on the same space as $ \PDF_{\xi,\eta} $.
    Let $ \PDF_\xi $, $ \PDF_\eta $, $ q_\xi $ and $ q_\eta $ be the corresponding marginal PDFs.
    Then
    \begin{equation}
        \label{eq:MI_approximation_decomposition}
        I(\xi;\eta) = \underbrace{\expect_{\proba_{\xi,\eta}} \log \left[ \frac{q_{\xi,\eta}(\xi,\eta)}{q_{\xi}(\xi), q_{\eta}(\eta)} \right]}_{I_q(\xi; \eta)} + \DKL{\PDF_{\xi,\eta}}{q_{\xi,\eta} } - \DKL{\PDF_\xi \otimes \PDF_\eta}{q_\xi \otimes q_\eta}
    \end{equation}
\end{theoremE}

\begin{proofE}
    In the following text, all the expectations are in terms of $ \proba_{\xi,\eta} $.
    \begin{align*}
        I(\xi;\eta) &= \expect \log \left[ \frac{\PDF_{\xi,\eta}(\xi,\eta)}{\PDF_\xi(\xi) \PDF_\eta(\eta)} \right] = \expect \log \left[ \frac{q_{\xi,\eta}(\xi,\eta)}{q_\xi(\xi) q_\eta(\eta)} \cdot \frac{\PDF_{\xi,\eta}(\xi,\eta)}{q_{\xi,\eta}(\xi,\eta)} \cdot \frac{q_\xi(\xi) q_\eta(\eta)}{\PDF_\xi(\xi) \PDF_\eta(\eta)} \right] =\\
        &= I_q(\xi;\eta) + \expect \log \left[ \frac{\PDF_{\xi,\eta}(\xi,\eta)}{q_{\xi,\eta}(\xi,\eta)} \right] + \expect \log \left[ \frac{q_\xi(\xi)}{\PDF_\xi(\xi)} \right] + \expect \log \left[ \frac{q_\eta(\eta)}{\PDF_\eta(\eta)} \right] = \\
        &= I_q(\xi;\eta) + \DKL{\PDF_{\xi,\eta}}{q_{\xi,\eta} } - \DKL{\PDF_\xi \otimes \PDF_\eta}{q_\xi \otimes q_\eta}
    \end{align*}
\end{proofE}

\begin{corollaryE}
    \label{corollary:MI_approximation_decomposition_bounds}
    Under the assumptions of~\cref{theorem:MI_approximation_decomposition}, $ |I(\xi;\eta) - I_q(\xi;\eta)| \leq \DKL{\PDF_{\xi,\eta}}{q_{\xi,\eta} } $.
\end{corollaryE}

\begin{proofE}
    As $ \DKL{\PDF_\xi \otimes \PDF_\eta}{q_\xi \otimes q_\eta} \geq 0 $,
    \[
        I(\xi;\eta) \leq I_q(\xi,\eta) + \DKL{\PDF_{\xi,\eta}}{q_{\xi,\eta} }
    \]
    As $ \DKL{\PDF_{\xi,\eta}}{q_{\xi,\eta} } \geq \DKL{\PDF_\xi}{q_\xi} $ and $ \DKL{\PDF_{\xi,\eta}}{q_{\xi,\eta} } \geq \DKL{\PDF_\eta}{q_\eta} $ (monotonicity property, see Theorem 2.16 in~\cite{polyanskiy2024information_theory}),
    \[
        I(\xi;\eta) \geq I_q(\xi;\eta) + \DKL{\PDF_{\xi,\eta}}{q_{\xi,\eta} } - 2 \cdot \DKL{\PDF_{\xi,\eta}}{q_{\xi,\eta} } = I_q(\xi;\eta) - \DKL{\PDF_{\xi,\eta}}{q_{\xi,\eta} }
    \]
\end{proofE}

This allows us to define the following MI estimate:
\begin{equation}
    \label{eq:general_MI_estimate}
    \hat I_{\textnormal{MIENF}}(\{ (x_k, y_k) \}_{k=1}^N)
    \defeq \hat I_{\hat q}(\hat f_X(X); \hat f_Y(Y)) = \frac{1}{N} \sum_{k=1}^N \log \left[ \frac{\hat q_{\xi,\eta}(\hat f_X(x_k), \hat f_Y(y_k))}{\hat q_\xi(\hat f_X(x_k)) \hat q_\eta(\hat f_Y(y_k))} \right],
\end{equation}
where $ \{ (x_k, y_k) \}_{k=1}^N $ is a sampling from $ (X,Y) $, and $ \hat q $, $ \hat f_X $ and $ \hat f_Y $ are selected according to the maximum likelihood.
The latter makes $ \hat I_{\textnormal{MIENF}} $ a consistent estimator:
\begin{theoremE}[$ \hat I_{\textnormal{MIENF}} $ is consistent]
    \label{eq:general_MI_estimate_consistency}
    Let $ X $, $ Y $, $ \hat f_X^{-1} $ and $ \hat f_Y^{-1} $ satisfy the conditions of~\cref{theorem:MI_under_nonsingular_mappings}.
    Let $ \mathcal{Q} $ be a family of universal PDF appproximators for a class of densities containing $ \proba_{X,Y} \circ (f_X^{-1} \times f_Y^{-1}) $ (pushforward probability measure in the latent space). Let $ \{ (x_k, y_k) \}_{k=1}^N $ be an i.i.d. sampling from $ (X, Y) $.
    Let $ \hat q_N \in \mathcal{Q} $ be a maximum-likelihood estimate of $ \proba_{X,Y} \circ (f_X^{-1} \times f_Y^{-1}) $ from the samples $ \{ (f_X(x_k), f_Y(y_k)) \}_{k=1}^N $.
    Let $ I_{\hat q_N}(f_X(X); f_Y(Y)) $ exist for every $ N $.
    Then
    \[
        \hat I_{\textnormal{MIENF}}(\{ (x_k, y_k) \}_{k=1}^N) \limproba I(X;Y)
    \]
\end{theoremE}

\begin{proofE}
    Following the assumptions on $ \hat q_N $, $ \DKL{\proba_{X,Y} \circ (f_X^{-1} \times f_Y^{-1})}{(\hat q_N)_{\xi,\eta} } \limproba 0 $ (\emph{universality} property).
    Due to~\cref{corollary:MI_approximation_decomposition_bounds}, this ensures $ I_{\hat q_N}(f_X(X); f_Y(Y)) \limproba I(f_X(X); f_Y(Y)) = I(X;Y) $ (the latter equality is due to~\cref{theorem:MI_under_nonsingular_mappings}).
    Finally, $ \hat I_{\textnormal{MIENF}}(X;Y) \limproba I_{\hat q_N}(f_X(X); f_Y(Y)) $ as an MC estimate.
\end{proofE}

Note that maximum-likelihood training of $ f_X $, $ f_Y $ also minimizes $ \DKL{\proba_{X,Y} \circ (f_X^{-1} \times f_Y^{-1})}{\hat q_{\xi,\eta} } $, which allows for surprisingly simple $ q \in \mathcal{Q} $ to be used, as we show in the subsequent sections.

The described approach is as general as possible.
We use it as a starting point for a development of a more elegant, cheap and practically sound MI estimator.
We also do not incorporate conditions, under which the universality property of $ \mathcal{Q} $ holds, as they depend on the choice of $ \mathcal{Q} $;
if one is interested in using normalizing flows as $ \mathcal{Q} $,
we refer to Section 3.4.3 in~\cite{kobyzev2021normflows_overview} or to~\cite{duong2023dine} for more details.

\section{Using Gaussian base distribution}
\label{section:restricted_method}

Note that the general approach requires finding the maximum-likelihood estimate $ \hat q $ and using it to perform an MC integration to acquire $ \hat I_{\hat q}(f_X(X); f_Y(Y)) $.

In this section, we drop these requirements by restricting our estimator via choosing $ \mathcal{Q} $ to be a family of multivariate Gaussian PDFs.
This allows as (a) to \emph{directly} estimate the MI via a \emph{closed-form} expression, (b) to employ a closed-form expression for optimal $ \hat q $, (c) to leverage the maximum entropy principle for Gaussian distributions, thus acquiring better non-asymptotic bounds, and (d) to analyze the variance of the proposed estimate.

\begin{theoremE}[Theorem 8.6.5 in~\cite{cover2006information_theory}]
    \label{theorem:Gaussian_maximizes_entropy}
    Let $ Z $ be a $ d $-dimensional absolutely continuous random vector with probability density function $ \PDF_Z $, mean $ m $ and covariance matrix $ \Sigma $.
    Then
    \begin{equation*}
        h(Z) = h\left( \normal(m, \Sigma) \right) - \DKL{\PDF_Z}{\normal(m, \Sigma)} 
        = \frac{d}{2} \log(2 \pi e) + \frac{1}{2} \log \det \Sigma - \DKL{\PDF_Z}{\normal(m, \Sigma)}
    \end{equation*}
\end{theoremE}

\begin{proofE}
    As $ h(Z - m) = h(Z) $,
    let us consider a centered random vector $ Z $.
    We denote probability density function of $ \normal(0, \Sigma) $ as $ \phi_\Sigma $.
    \[
        \DKL{\PDF_Z}{\normal(0, \Sigma)} = \int_{\reals^d} \PDF_Z(z) \log \frac{\PDF_Z(z)}{\phi_\Sigma(z)} \, dz = -h(Z) - \int_{\reals^d} \PDF_Z(z) \log \phi_\Sigma(z) \, dz
    \]
    Note that
    \[
        \int_{\reals^d} \PDF_Z(z) \log \phi_\Sigma(z) \, dz = const + \frac{1}{2} \expect_Z z^T \Sigma^{-1} z = const + \frac{1}{2} \expect_{\normal(0, \Sigma)} z^T \Sigma^{-1} z = \int_{\reals^d} \phi_\Sigma(z) \log \phi_\Sigma(z) \, dz,
    \]
    from which we arrive to the final result:
    \[
        \DKL{\PDF_Z}{\normal(0, \Sigma)} = -h(Z) + h(\normal(0, \Sigma))
    \]
\end{proofE}

\begin{remark}
    \label{remark:no_DKL_entropy_formula_in_general}
    Note that $ h(Z) \neq h(Z') - \DKL{\PDF_Z}{\PDF_{Z'}} $ in general.
\end{remark}

\begin{corollaryE}
    \label{corollary:MI_after_Gaussianization}
    Let $ (\xi,\eta) $ be an absolutely continuous pair of random vectors with joint and marginal probability density functions $ \PDF_{\xi,\eta} $, $ \PDF_{\xi} $ and $ \PDF_{\eta} $ correspondingly,
    and mean and covariance matrix being
    \[
        m =
        \begin{bmatrix}
            m_\xi \\
            m_\eta
        \end{bmatrix},
        \quad
        \Sigma =
        \begin{bmatrix}
            \Sigma_{\xi,\xi} & \Sigma_{\xi,\eta} \\
            \Sigma_{\eta,\xi} & \Sigma_{\eta,\eta}
        \end{bmatrix}
    \]
    Then
    \begin{multline*}
        I(\xi; \eta) = \frac{1}{2} \left[ \log \det \Sigma_{\xi,\xi} + \log \det \Sigma_{\eta,\eta} - \log \det \Sigma \right] + \\
        + \DKL{\PDF_{\xi,\eta}}{\normal(m, \Sigma)}
        - \DKL{\PDF_\xi \otimes \PDF_\eta}{\normal(m, \diag(\Sigma_{\xi,\xi}, \Sigma_{\eta,\eta})},
    \end{multline*}
    which implies the following in the case of marginally Gaussian $ \xi $ and $ \eta $:
    \begin{equation}
        \label{eq:closed_form_MI_expression_Gaussian}
        I(\xi; \eta) \geq \frac{1}{2} \left[ \log \det \Sigma_{\xi,\xi} + \log \det \Sigma_{\eta,\eta} - \log \det \Sigma \right],
    \end{equation}
    with the equality holding if and only if $ (\xi, \eta) $ are jointly Gaussian.
\end{corollaryE}

\begin{proofE}
    By applying~\cref{theorem:Gaussian_maximizes_entropy} to~\cref{eq:mutual_information_from_joined_entropy}, we derive the following expression:
    \begin{multline*}
        I(\xi; \eta) = \frac{1}{2} \left[ \log \det \Sigma_{\xi,\xi} + \log \det \Sigma_{\eta,\eta} - \log \det \Sigma \right] + \\
        + \DKL{\PDF_{\xi,\eta}}{\normal(m, \Sigma)}
        - \DKL{\PDF_\xi}{\normal(m_\xi, \Sigma_{\xi,\xi})}
        - \DKL{\PDF_\eta}{\normal(m_\eta, \Sigma_{\eta,\eta})}
    \end{multline*}
    Note that
    \[
        \DKL{\PDF_\xi}{\normal(m_\xi, \Sigma_{\xi,\xi})} + \DKL{\PDF_\eta}{\normal(m_\eta, \Sigma_{\eta,\eta})} = \DKL{\PDF_\xi \otimes \PDF_\eta}{\normal(m, \diag(\Sigma_{\xi,\xi}, \Sigma_{\eta,\eta})},
    \]
    which results in
    \begin{multline*}
        I(\xi; \eta) = \frac{1}{2} \left[ \log \det \Sigma_{\xi,\xi} + \log \det \Sigma_{\eta,\eta} - \log \det \Sigma \right] + \\
        + \DKL{\PDF_{\xi,\eta}}{\normal(m, \Sigma)} -  \DKL{\PDF_\xi \otimes \PDF_\eta}{\normal(m, \diag(\Sigma_{\xi,\xi}, \Sigma_{\eta,\eta})}
    \end{multline*}
\end{proofE}

\begin{corollaryE}
    \label{corollary:MI_after_Gaussianization_bounds}
    Under the assumptions of~\cref{corollary:MI_after_Gaussianization},
    \[
        \left| I(\xi;\eta) - \frac{1}{2} \left[ \log \det \Sigma_{\xi,\xi} + \log \det \Sigma_{\eta,\eta} - \log \det \Sigma \right] \right| \leq \DKL{\PDF_{\xi,\eta}}{ \normal(m, \Sigma) }.
    \]
\end{corollaryE}

\begin{proofE}
    The same as for~\cref{corollary:MI_approximation_decomposition_bounds}
\end{proofE}

\begin{remark}
    \label{remark:MI_after_Gaussianization_bounds_tightness}
    The upper bound from~\cref{corollary:MI_after_Gaussianization_bounds} is tight, consider $ \xi \sim \normal(0,1) $, $ \eta = (2 B - 1) \cdot \xi $, where $ B \sim \textnormal{Bernoulli}(1/2) $ and is independent of $ \xi $.
\end{remark}

From now on we denote $ f_X(X) $ and $ f_Y(Y) $ as $ \xi $ and $ \eta $ correspondingly.
Note that, in contrast to~\cref{theorem:MI_approximation_decomposition} and \cref{corollary:MI_approximation_decomposition_bounds}, $ I_q(\xi;\eta) $ is replaced by a closed-form expression, which is not possible to achieve in general.
The provided closed-form expression allows for calculating MI for jointly Gaussian $ (\xi, \eta) $,
and serves as a lower bound on MI in the general case of $ \xi $ and $ \eta $ being only marginally Gaussian.

\subsection{General binormalization approach}
\label{subsection:general_binormalization_approach}

In order to minimize $ \DKL{\PDF_{\xi,\eta}}{ \normal(m, \Sigma) } $,
we train $ f_X \times f_Y $ as a single normalizing flow.
Instead of maximizing the log-likelihood using two separate and fixed base (latent) distributions,
we maximize the log-likelihood of the joint sampling $ \{ (x_k, y_k) \}_{k=1}^N $ using the whole set of Gaussian distributions
as possible base distributions.

\def\setnormal{S_{\normal}}
\begin{definition}
    \label{definition:normal_distributions_set}
    We denote a set of $ d $-dimensional Gaussian distributions as
    $ \setnormal^d \defeq \left\{\normal(m, \Sigma) \mid m \in \reals^{d}, \Sigma \in \reals^{d \times d} \right\} $.%
    \footnote{We omit $ d $ whenever it can be deduced from the context.}
\end{definition}



\begin{definition}
    \label{definition:log_likelihood_for_sets}
    The log-likelihood of a sampling $ \{ z_k \}_{k=1}^N $ with respect to a set of absolutely continuous probability distributions is defined as follows:
    \[
        \loglikelihood_S (\{ z_k \}) \defeq \sup_{\mu \in S} \loglikelihood_\mu(\{ z_k \}) = \sup_{\mu \in S} \sum_{k=1}^N \log \left[\left(\frac{d \mu}{d z}\right)(z_k) \right]
    \]
\end{definition}

Let us define $ f \defeq f_X \times f_Y $ (Cartesian product of flows)
and $ S \circ f = \{ \mu \circ f \mid \mu \in S \} $ (set of pushforward measures).
In our case, $ \loglikelihood_{\setnormal \circ f} \left( \{ (x_k, y_k) \} \right) $ can be expressed in a closed-form via the change of variables formula (identically to a classical normalizing flows setup) and maximum-likelihood estimates for $ m $ and $ \Sigma $.

\begin{statementE}
    \label{statement:ML_estimates_maximize_likelihood}

    \def\eqoneleft{\loglikelihood_{\setnormal \circ (f_X \times f_Y)} \left( \{ (x_k, y_k) \} \right)}
    \def\eqoneright{\log \left| \det \frac{\partial f(x,y)}{\partial (x,y)} \right| + \loglikelihood_{\normal(\hat m, \hat \Sigma)} (\{ f(x_k, y_k) \})}

    \def\eqtwoleft{\log \left| \det \frac{\partial f(x,y)}{\partial (x,y)} \right|}
    \def\eqtworight{\log \left| \det \frac{\partial f_X(x)}{\partial x} \right| + 
        \log \left| \det \frac{\partial f_Y(y)}{\partial y} \right|}

    \[
        \loglikelihood_{\setnormal \circ (f_X \times f_Y)} \left( \{ (x_k, y_k) \} \right) = \log \left| \det \frac{\partial f(x,y)}{\partial (x,y)} \right| + \loglikelihood_{\normal(\hat m, \hat \Sigma)} (\{ f(x_k, y_k) \}),
    \]
    where
    \[
        \log \left| \det \frac{\partial f(x,y)}{\partial (x,y)} \right| = \log \left| \det \frac{\partial f_X(x)}{\partial x} \right| + 
        \log \left| \det \frac{\partial f_Y(y)}{\partial y} \right|,
    \]
    \begin{equation*}
        \hat m = \frac{1}{N} \sum_{k=1}^N f(x_k,y_k), \qquad
        \hat \Sigma = \frac{1}{N} \sum_{k=1}^N (f(x_k,y_k) - \hat m) (f(x_k,y_k) - \hat m)^T
    \end{equation*}
\end{statementE}

\begin{proofE}
    Due to the change of variables formula,
    \[
        \loglikelihood_{\setnormal \circ (f_X \times f_Y)} \left( \{ (x_k, y_k) \} \right) = \log \left| \det \frac{\partial f(x,y)}{\partial (x,y)} \right| + \loglikelihood_{\normal} (\{ f(x_k, y_k) \})
    \]
    As $ f = f_X \times f_Y $, the Jacobian matrix $ \frac{\partial f(x,y)}{\partial(x, y)} $ is block-diagonal, so
    \[
        \log \left| \det \frac{\partial f(x,y)}{\partial (x,y)} \right| =
        \log \left| \det \frac{\partial f_X(x)}{\partial x} \right| + 
        \log \left| \det \frac{\partial f_Y(y)}{\partial y} \right|
    \]
    Finally, we use the maximum-likelihood estimates for $ m $ and $ \Sigma $ to drop the supremum in $ \loglikelihood_{\normal} (\{ f(x_k, y_k) \}) $:
    \begin{multline*}
        \hat m = \frac{1}{N} \sum_{k=1}^N f(x_k,y_k), \quad
        \hat \Sigma = \frac{1}{N} \sum_{k=1}^N (f(x_k,y_k) - \hat m) (f(x_k,y_k) - \hat m)^T
        \quad \Longrightarrow \\
        \Longrightarrow \quad
        \loglikelihood_{\normal} (\{ f(x_k, y_k) \}) = \loglikelihood_{\normal(\hat m, \hat \Sigma)} (\{ f(x_k, y_k) \})
    \end{multline*}
\end{proofE}

Maximization of $ \loglikelihood_{\setnormal \circ (f_X \times f_Y)} \left( \{ (x_k, y_k) \} \right) $ with respect to parameters of $ f_X $ and $ f_Y $
minimizes $ \DKL{\PDF_{\xi,\eta}}{ \normal(m, \Sigma) } $~\cite{kobyzev2021normflows_overview},
making it possible to apply \cref{theorem:MI_under_nonsingular_mappings} and \cref{corollary:MI_after_Gaussianization} to acquire an MI estimate with corresponding non-asymptotic error bounds from~\cref{corollary:MI_after_Gaussianization_bounds}:
\begin{equation}
    \label{eq:general_Gaussian_MI_estimate}
    \hat I_{\normal\textnormal{-MIENF}}(\{ (x_k, y_k) \}_{k=1}^N)
    \defeq \frac{1}{2} \left[ \log \det \hat \Sigma_{\xi,\xi} + \log \det 
    \hat \Sigma_{\eta,\eta} - \log \det 
    \hat \Sigma \right]
\end{equation}

Note that if only marginal Gaussianization is achieved,
\cref{eq:general_Gaussian_MI_estimate} serves as a lower bound estimate.
As $ \hat I_{\normal\textnormal{-MIENF}} $ involves maximum-likelihood estimates of covariance matrices, existing results can be employed to acquire the asymptotic variance:
\begin{lemmaE}[Lemma 2 in~\cite{duong2023dine}]
    Let $ f_X, f_Y $ be fixed.
    Let $ (\xi,\eta) $ has finite covariance matrix.
    Then, the asymptotic variance of $ \hat I_{\normal\textnormal{-MIENF}} $ is $ O(d^2/N) $, with $ d $ being the dimensionality.
    If $ (\xi,\eta) $ is also Gaussian, the asymptotic variance is further improved to $ O(d/N) $.
\end{lemmaE}

\begin{proofE}
    The variance of $ \hat I_{\normal\textnormal{-MIENF}} $ is upper bounded by the sum of the variances of the log-det terms.
    If $ (\xi,\eta) $ is Gaussian, the log-det terms are asymptotically normal with the asymptotic variance being $ 2d/N $ (Corollary 1 in~\cite{cai2015logdet}).
    If $ (\xi, \eta) $ is not Gaussian, the central limit theorem can be applied to each element of the covariance matrix,
    which in combination with the delta method yields the final result.
\end{proofE}

\subsection{Refined approach}
\label{subsection:refined_approach}

Although the proposed general method is compelling,
as it requires only the slightest modifications to the conventional normalizing flows setup to make the application of the closed-form expressions for MI possible,
we have to mention several drawbacks.

Firstly, the log-likelihood maximum is ambiguous,
as $ \loglikelihood_{\setnormal} $ is invariant under invertible affine mappings,
which makes the proposed log-likelihood maximization an ill-posed problem:
\begin{remarkE}
    Let $ A \in \reals^{d \times d} $ be a non-singular matrix,
    $ b \in \reals $,
    $ \{ z_k \}_{k = 1}^N \subseteq \reals^d $.
    Then
    \[
        \loglikelihood_{\setnormal \circ (A z + b)} (\{z_k\}) = \loglikelihood_{\setnormal} (\{z_k\})
    \]
\end{remarkE}

\begin{proofE}
    \begin{multline*}
        \loglikelihood_{\setnormal \circ (A z + b)} (\{z_k\})
        = \log |\det A | + \loglikelihood_{\setnormal} (\{A z_k + b\})
        = \log |\det A | + \loglikelihood_{\normal(\hat m + b, A \hat \Sigma A^T)} (\{A z_k + b\}) = \\
        = \log |\det A | + \log |\det A^{-1} | + \loglikelihood_{\normal(\hat m, \hat \Sigma)} (\{z_k\})
        = \loglikelihood_{\setnormal} (\{z_k\})
    \end{multline*}
\end{proofE}


Secondly, this method requires a regular (ideally, after every gradient descend step) updating of $ \hat m $ and $ \hat \Sigma $ for the whole dataset, which is expensive.
In practice, these estimates can be replaced with batchwise maximum likelihood estimates,
which are used to update $ \hat m $ and $ \hat \Sigma $ via exponential moving average (EMA).
This approach, however, requires tuning EMA multiplication coefficient in accordance with the learning rate to make the training fast yet stable.
We also note that $ \hat m $ and $ \hat \Sigma $ can be made learnable via the gradient ascend,
but the benefits of the closed-form expressions for $ \loglikelihood_{\setnormal} $ in \cref{statement:ML_estimates_maximize_likelihood} are thus lost.

Finally, each loss function evaluation requires inversion of $ \hat \Sigma $,
and each MI estimation requires evaluation of $ \det \hat \Sigma $ and determinants of two diagonal blocks of $ \hat \Sigma $.
This might be resource-consuming in high-dimensional cases,
as matrices may not be sparse.
Numerical instabilities might also occur if $ \hat \Sigma $ happens to be ill-posed (might happen in the case of data lying on a manifold or due to high MI).

That is why we propose an elegant and simple way to eliminate all the mentioned problems by further narrowing down $ \setnormal $ to a subclass of Gaussian distributions with simple and bounded covariance matrices and fixed means.
This approach is somewhat reminiscent of the non-linear canonical correlation analysis~\cite{lancaster1958bivariate_structure, hannan1961cca}.

\def\settridiagnormal{S_{\textnormal{tridiag-} \normal}}
\begin{definition}
    \label{definition:tridiagonal_normal_distributions_set}
    \begin{equation*}
        \settridiagnormal^{d_\xi, d_\eta} \defeq \left\{\normal(0, \Sigma) \mid \Sigma_{\xi,\xi} = I_{d_\xi}, \Sigma_{\eta,\eta} = I_{d_\eta},
        \Sigma_{\xi,\eta} (\equiv \Sigma_{\eta,\xi}^T) = \diag (\{ \rho_j \}), \rho_j \in [0;1) \right\}
    \end{equation*}
\end{definition}

This approach solves all the aforementioned problems without any loss in generalization ability, as it is shown by the following results:

\begin{corollaryE}
    \label{corollary:simplified_closed_form_MI_expression_Gaussian}
    If $ (\xi, \eta) \sim \mu \in \settridiagnormal $, then
    \begin{equation}
        \label{eq:simplified_closed_form_MI_expression_Gaussian}
        I(\xi; \eta) = - \frac{1}{2} \sum_j \log (1 - \rho_j^2)
    \end{equation}
\end{corollaryE}

\begin{proofE}
    Under the proposed assumptions, $ \log \det \Sigma_{\xi,\xi} = \log \det \Sigma_{\eta,\eta} = 0 $,
    so $ I(\xi; \eta) = -\frac{1}{2} \log \det \Sigma $.
    The matrix $ \Sigma $ is block-diagonalizable via the following permutation:
    \[
        (\xi_1, \ldots, \xi_{d_\xi}, \eta_1, \ldots, \eta_{d_\eta}) \mapsto (\xi_1, \eta_1, \xi_2, \eta_2, \ldots),
    \]
    with the blocks being
    \[
        \Sigma_{\xi_i, \eta_i} =
        \begin{bmatrix}
            1 & \rho_j \\
            \rho_j & 1
        \end{bmatrix}
    \]
    The determinant of block-diagonal matrix is a product of the block determinants, so $ I(\xi; \eta) = - \frac{1}{2} \sum_k \log (1 - \rho_j^2) $.
\end{proofE}

%


\begin{statementE}[Canonical correlation analysis]
    \label{statement:MI_preserving_tridiagonalization}
    Let $ (\xi, \eta) \sim \normal(m, \Sigma) $,
    where $ \Sigma $ is non-singular.
    There exist invertible affine mappings $ \varphi_\xi $, $ \varphi_\eta $
    such that $ (\varphi_\xi(\xi), \varphi_\eta(\eta)) \sim \mu \in \settridiagnormal $.
    Due to \cref{theorem:MI_under_nonsingular_mappings}, the following also holds:
    $ I(\xi; \eta) = I(\varphi_\xi(\xi); \varphi_\eta(\eta)) $.
\end{statementE}

\begin{proofE}
    Let us consider centered $ \xi $ and $ \eta $,
    as shifting is an invertible affine mapping.
    Note that $ \Sigma_{\xi, \xi} $ and $ \Sigma_{\eta, \eta} $ are positive definite and symmetric,
    so the following symmetric matrix square roots are defined: $ A = \Sigma_{\xi, \xi}^{-1/2} $, $ B = \Sigma_{\eta, \eta}^{-1/2} $.
    By applying these invertible linear transformation to $ \xi $ and $ \eta $ we get
    \[
        \cov(A \xi, B \eta) = 
        \begin{bmatrix}
            \Sigma_{\xi, \xi}^{-1/2} \Sigma_{\xi,\xi} (\Sigma_{\xi, \xi}^{-1/2})^T & \Sigma_{\xi, \xi}^{-1/2} \Sigma_{\xi, \eta} (\Sigma_{\eta, \eta}^{-1/2})^T \\
            \Sigma_{\eta, \eta}^{-1/2} \Sigma_{\eta, \xi} (\Sigma_{\xi, \xi}^{-1/2})^T & \Sigma_{\eta, \eta}^{-1/2} \Sigma_{\eta,\eta} (\Sigma_{\eta, \eta}^{-1/2})^T
        \end{bmatrix}
        = 
        \begin{bmatrix}
            I & C \\
            C^T & I
        \end{bmatrix},
    \]
    where $ C =  \Sigma_{\xi, \xi}^{-1/2} \Sigma_{\xi, \eta} (\Sigma_{\eta, \eta}^{-1/2})^T $.
    
    Then, the singular value decomposition is performed: $ C = U R V^T $,
    where $ U $ and $ V $ are orthogonal, $ R = \diag(\{ \rho_j \}) $.
    Finally, we apply $ U^T $ to $ A\xi $ and $ V^T $ to $ B \eta $:
    \[
        \cov(U^T A \xi, V^T B \eta) = 
        \begin{bmatrix}
            U^T U & U^T C V \\
            (U^T C V)^T & V^T V
        \end{bmatrix}
        =
        \begin{bmatrix}
            I & R \\
            R^T & I
        \end{bmatrix},
    \]
    Note that $ U^T A $ and $ V^T B $ are invertible.
\end{proofE}

\begin{statementE}
    \label{statement:tridiagonal_to_diagonal_Gaussian}
    Let $ (\xi, \eta) \sim \normal(0, \Sigma) \in \settridiagnormal $, $ \{ z_k \}_{k=1}^N \subseteq \reals^{d_\xi + d_\eta} $.
    Then
    \[
        \loglikelihood_{\normal(0, \Sigma)}(\{ z_k \} ) = I(\xi; \eta) + \loglikelihood_{\normal(0, I)}(\{ \Sigma^{-1/2} z_k \}),
    \]
    where
    \[
        \Sigma^{-1/2} =
        \left[
            \begin{array}{cc|cc}
                \alpha_j + \beta_j &        & \alpha_j - \beta_j & \\
                                   & \ddots &                    & \ddots \\
                \hline
                \alpha_j - \beta_j &        & \alpha_j + \beta_j & \\
                \undermat{d_\xi}{ \hphantom{\alpha_j + \beta_j} & \ddots} & \undermat{d_\eta}{ \hphantom{\alpha_j + \beta_j} & \ddots} \\
            \end{array}
        \right]
        \vspace{16pt}
        \qquad
        \begin{aligned}
        \alpha_j &= \frac{1}{2 \sqrt{1 + \rho_j}} \\
        \beta_j  &= \frac{1}{2 \sqrt{1 - \rho_j}}
        \end{aligned}
    \]
    
    \vspace{1.0\baselineskip}
    and $ I(\xi;\eta) $ is calculated via~\eqref{eq:simplified_closed_form_MI_expression_Gaussian}.
\end{statementE}

\begin{proofE}
    Note that $ \Sigma $ is positive definite and symmetric,
    so the following symmetric matrix square root is defined: $ \Sigma^{-1/2} $.
    This matrix is a normalization matrix:
    $ \cov(\Sigma^{-1/2} (\xi,\eta)) = \Sigma^{-1/2} \, \Sigma \, (\Sigma^{-1/2})^T = I $.
    
    According to the change of variable formula,
    \[
        \loglikelihood_{\normal(0, \Sigma)}(\{ z_k \} ) = \log\det \Sigma^{-1/2} + \loglikelihood_{\normal(0, I)}(\{ \Sigma^{-1/2} z_k \})
    \]
    As $ \Sigma_{\xi,\xi} = I_{d_\xi} $ and $ \Sigma_{\eta,\eta} = I_{d_\eta} $,
    from the equation~\eqref{eq:closed_form_MI_expression_Gaussian} we derive
    \[
        I(\xi;\eta) = - \frac{1}{2} \log \det \Sigma = \log \det \Sigma^{-1/2}
    \]

    Finally, it is sufficient to validate the proposed closed-form expression for $ \Sigma^{1/2} $ in the case of $ 2 \times 2 $ matrices,
    as $ \Sigma $ is block-diagonalizable (with $ 2 \times 2 $ blocks) via the following permutation:
    \[
        M \colon (\xi_1, \ldots, \xi_{d_\xi}, \eta_1, \ldots, \eta_{d_\eta}) \mapsto (\xi_1, \eta_1, \xi_2, \eta_2, \ldots),
    \]
    Note that
    \[
        \begin{bmatrix}
            \alpha + \beta & \alpha - \beta \\
            \alpha - \beta & \alpha + \beta
        \end{bmatrix}^2
        =
        2 \cdot
        \begin{bmatrix}
            \alpha^2 + \beta^2 & \alpha^2 - \beta^2 \\
            \alpha^2 - \beta^2 & \alpha^2 + \beta^2
        \end{bmatrix}
        =
        \frac{1}{1 - \rho^2}
        \begin{bmatrix}
            1 & -\rho \\
            -\rho & 1
        \end{bmatrix}
    \]
    \[
        \frac{1}{1 - \rho^2}
        \begin{bmatrix}
            1 & -\rho \\
            -\rho & 1
        \end{bmatrix}
        \cdot
        \begin{bmatrix}
            1 & \rho \\
            \rho & 1
        \end{bmatrix}
        =
        \begin{bmatrix}
            1 & 0 \\
            0 & 1
        \end{bmatrix}
    \]
\end{proofE}

Maximization of $ \loglikelihood_{\settridiagnormal \circ (f_X \times f_Y)} \left( \{ (x_k, y_k) \} \right) $ with respect to the parameters of $ f_X $ and $ f_Y $ yields the following MI estimate:
\begin{equation}
    \label{eq:refined_Gaussian_MI_estimate}
    \hat I_{\textnormal{tridiag-} \normal \textnormal{-MIENF}}(\{ (x_k, y_k) \}_{k=1}^N)
    \defeq -\frac{1}{2} \sum_j \log (1 - \hat \rho_j^2),
\end{equation}
where $ \hat \rho_j $ are the maximum-likelihood estimates of $ \rho_j $.

\subsection{Tractable error bounds}
\label{subsection:tractable_bounds}

Note that~\cref{corollary:MI_approximation_decomposition_bounds} and \cref{corollary:MI_after_Gaussianization_bounds} provide us with non-asymptotic, but untractable bounds.
These bounds can be estimated via various KL divergence estimators~\cite{donsker1983dv, nguyen2007nwj, belghazi2018mine, nishiyama2019new_lower_bound_on_kld, franzese2024minde}.
However, this requires training an additional neural network, which is computationally expensive.

Conveniently, as the proposed method involves maximization of the likelihood, a cheap and tractable lower bound on the KL divergence can be obtained via an entropy-cross-entropy decomposition:
\begin{equation}
    \label{eq:entropy_cross_entropy_decomposition}
    \DKL{\PDF}{q} = \expect_{Z \sim \PDF} \log \frac{p(Z)}{q(Z)} = -\expect_{Z \sim \PDF} \log q(Z) - h(Z) \geq -\expect_{Z \sim \PDF} \log q(Z) - h(\normal(m, \Sigma))
\end{equation}
Note that $ \expect \log q(Z) $ in~\cref{eq:entropy_cross_entropy_decomposition} is estimated by the log-likelihood of the samples in the latent space (which is inevitably evaluated each training step),
and $ h(Z) $ can be upper bounded by the entropy of a Gaussian distribution (see~\cref{theorem:Gaussian_maximizes_entropy}).
Unfortunately, as~\cref{theorem:Gaussian_maximizes_entropy} has already been employed to derive~\cref{corollary:MI_after_Gaussianization}, the proposed bound is trivial (equaling to zero) in our Gaussian-based setup.
However, this idea might still be useful in the general case.


\subsection{Implementation details}
\label{subsection:implementation_details}

In this section, we would like to emphasize the computational simplicity of the proposed amendments to the conventional normalizing flows setup.

Firstly, \cref{statement:tridiagonal_to_diagonal_Gaussian} allows for a cheap log-likelihood computation and sample generation,
as $ \Sigma $, $ \Sigma^{-1/2} $ and $ \Sigma^{-1} $ are easily calculated from the $ \{ \rho_j \} $ and are sparse (tridiagonal, block-diagonalizable with $ 2 \times 2 $ blocks).
Secondly, the method requires only $ d' = \min \{d_\xi, d_\eta \} $ additional parameters: the estimates for $ \{ \rho_j\} $.
As $ \rho_j \in [0;1) $, an appropriate parametrization should be chosen to allow for stable learning of $ \{ \hat \rho_j \} $ via the gradient ascend.
We propose using the \emph{logarithm of cross-component MI}:
$ w_j \defeq \log I (\xi_j; \eta_j) = \log\left[ -\frac{1}{2} \log (1 - \rho_j^2)  \right] $.
In this parametrization $ w_j \in \reals $ and
\[
    \hat I_{\textnormal{tridiag-} \normal \textnormal{-MIENF}}(\{ (x_k, y_k) \}_{k=1}^N) = \sum_j e^{\hat w_j}, \quad
    \rho_j = \sqrt{1 - \exp(-2 \cdot e^{w_j})} \in (0;1)
\]

Although $ \rho_j = 0 $ is not achievable in the proposed parametrization, it can be made arbitrary close to $ 0 $ 
with $ w_j \to -\infty $.

%
%

\subsection{Extension to non-Gaussian base distributions and non-bijective flows}

The proposed method can be easily generalized to account for any base distribution with closed-form expression for MI, or even a combination of such distributions.
For example, a smoothed uniform distribution can be considered,
with the learnable parameter being the smoothing constant $ \varepsilon $,
see~\cref{subsection:smoothed_uniform_distribution}, \cref{eq:smoothed_uniform_mutual_information}.
However, due to~\cref{remark:no_DKL_entropy_formula_in_general},
neither~\cref{corollary:MI_approximation_decomposition_bounds}, nor~\cref{corollary:MI_after_Gaussianization_bounds} can be used to bound the estimation error in this case.


Also note that, as~\cref{theorem:MI_under_nonsingular_mappings} does not require $ g $ to be bijective,
our method is naturally extended to injective normalizing flows~\cite{teng2019invertible_AE, brehmer2020flows_manifold_learning}.
Moreover, according to~\cite{butakov2024lossy_compression},
such approach may actually facilitate the estimation of MI.

\section{Experiments}
\label{section:experiments}

To evaluate our estimator,
we utilize synthetic datasets with known mutual information.
In~\cite{czyz2023beyond_normal} and \cite{butakov2024lossy_compression}, extensive frameworks for evaluation of MI estimators have been proposed.
In our work, we borrow complex high-dimensional tests from~\cite{butakov2024lossy_compression} and all non-Gaussian base distributions with known MI from~\cite{czyz2023beyond_normal} (see~\cref{appendix:non_gaussian_based_tests} for more details).
The formal description of the dataset generation and estimator evaluation is provided in~\cref{algorithm:dataset_generation_and_estimator_evaluation}.
Essentially similar setups are widely used to test the MI estimators~\cite{kraskov2004KSG, belghazi2018mine, poole2019on_variational_bounds_MI, mcallester2020limitations_MI, butakov2024lossy_compression, franzese2024minde}.


\begin{algorithm}[ht!]
	\caption{MI estimator evaluation}
	\begin{algorithmic}[1]
    	\STATE Generate two datasets of samples from random vectors $ \xi $ and $ \eta $ with known ground truth mutual information (see~\cref{corollary:MI_after_Gaussianization}, \cref{corollary:simplified_closed_form_MI_expression_Gaussian} and \cref{appendix:non_gaussian_based_tests} for examples): $ \{ (a_k, b_k) \}_{k=1}^N $.
    	\STATE Choose functions $ g_\xi $ and $ g_\eta $ satisfying conditions of \cref{theorem:MI_under_nonsingular_mappings}, so $ I(\xi; \eta) = I\bigl(g_\xi(\xi); g_\eta(\eta)\bigr) $.
    	\STATE Estimate $ I \bigl(g_\xi(\xi); g_\eta(\eta) \bigr) $ via $ \{ (g_\xi(a_k), g_\eta(b_k)) \}_{k=1}^N $ and compare the estimated value with the ground truth.
	\end{algorithmic}
	\label{algorithm:dataset_generation_and_estimator_evaluation}
\end{algorithm}

For the first set of experiments, we map a low-dimensional correlated Gaussian distribution to a distribution of high-dimensional images of geometric shapes.
We compare our method with the Mutual Information Neural Estimator (MINE)~\cite{belghazi2018mine},
Nguyen-Wainwright-Jordan (NWJ)~\cite{nguyen2007nwj, belghazi2018mine} and Nishiyama's~\cite{nishiyama2019new_lower_bound_on_kld} estimators,
nearest neighbor Kraskov-Stoegbauer-Grassberger~\cite{kraskov2004KSG} and $ 5 $-nearest neighbors weighted Kozachenko-Leonenko (WKL) estimator~\cite{kozachenko1987entropy_of_random_vector, berrett2019efficient_knn_entropy_estimation};
the latter is fed with the data compressed via a convolutional autoencoder (CNN AE) in accordance to the pipeline from~\cite{butakov2024lossy_compression}.

For the second set of experiments, incompressible, high-dimensional non-Gaussian-based distributions are considered. These experiments are conducted to evaluate the robustness of our estimator to the distributions,
which can not be precisely Gaussianized via a Cartesian product of two flows. Here we compare our method to the ground truth only, as similar experiments have been conducted to test other estimators in~\cite{czyz2023beyond_normal}.

\begin{figure}[t!]
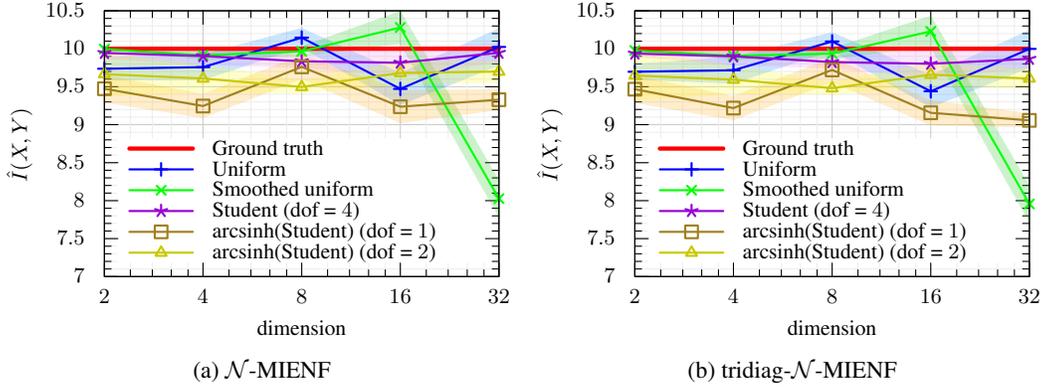

    \begin{subfigure}[t]{0.5\textwidth}
        \centering
        \small
        \begin{gnuplot}[terminal=tikz, terminaloptions={color size 7.1cm,4.7cm fontscale 0.9}]
            load "./gnuplot/common.gp"
            
            MI = 10.0
            uniform_file_path = "./data/N-MIENF/uniform.csv"
            smoothed_uniform_file_path = "./data/N-MIENF/smoothed_uniform.csv"
            Student_dof_4_file = "./data/N-MIENF/Student_dof_4.csv
            arcsinh_dof_1_file = "./data/N-MIENF/arcsinh_Student_dof_1.csv"
            arcsinh_dof_2_file = "./data/N-MIENF/arcsinh_Student_dof_2.csv"
            confidence = 0.999

            set yrange [7:*]
    
            load "./gnuplot/multidim.gp"
        \end{gnuplot}
        \vspace{-1.0\baselineskip}
        \caption{$ \normal $-MIENF}
    \end{subfigure}
    \begin{subfigure}[t]{0.5\textwidth}
        \centering
        \small
        \begin{gnuplot}[terminal=tikz, terminaloptions={color size 7.1cm,4.7cm fontscale 0.9}]
            load "./gnuplot/common.gp"
            
            MI = 10.0
            uniform_file_path = "./data/tridiag-N-MIENF/uniform.csv"
            smoothed_uniform_file_path = "./data/tridiag-N-MIENF/smoothed_uniform.csv"
            Student_dof_4_file = "./data/tridiag-N-MIENF/Student_dof_4.csv
            arcsinh_dof_1_file = "./data/tridiag-N-MIENF/arcsinh_Student_dof_1.csv"
            arcsinh_dof_2_file = "./data/tridiag-N-MIENF/arcsinh_Student_dof_2.csv"
            confidence = 0.999

            set yrange [7:*]
    
            load "./gnuplot/multidim.gp"
        \end{gnuplot}
        \vspace{-1.0\baselineskip}
        \caption{tridiag-$ \normal $-MIENF}
    \end{subfigure}
    \caption{Tests with incompressible multidimensional data. ``Uniform'' denotes the uniformly distributed samples acquired from the correlated Gaussians via the Gaussian CDF. ``Smoothed uniform'' and ``Student'' denote the non-Gaussian-based distributions described in~\cref{appendix:non_gaussian_based_tests}.
    ``arcsinh(Student)'' denotes the $ \arcsinh $ function applied to the ``Student'' example (this is done to avoid numerical instabilities in the case of long-tailed distributions).
    We run each test $ 5 $ times and plot $ 99.9 \% $ asymptotic Gaussian CIs. $ 10 \cdot 10^3 $ samples were used. Note that $ \normal $-MIENF and tridiag-$ \normal $-MIENF yield almost the same results with similar bias.}
    \label{figure:multidimensional_tests}
\end{figure}

\begin{figure*}[tb!]
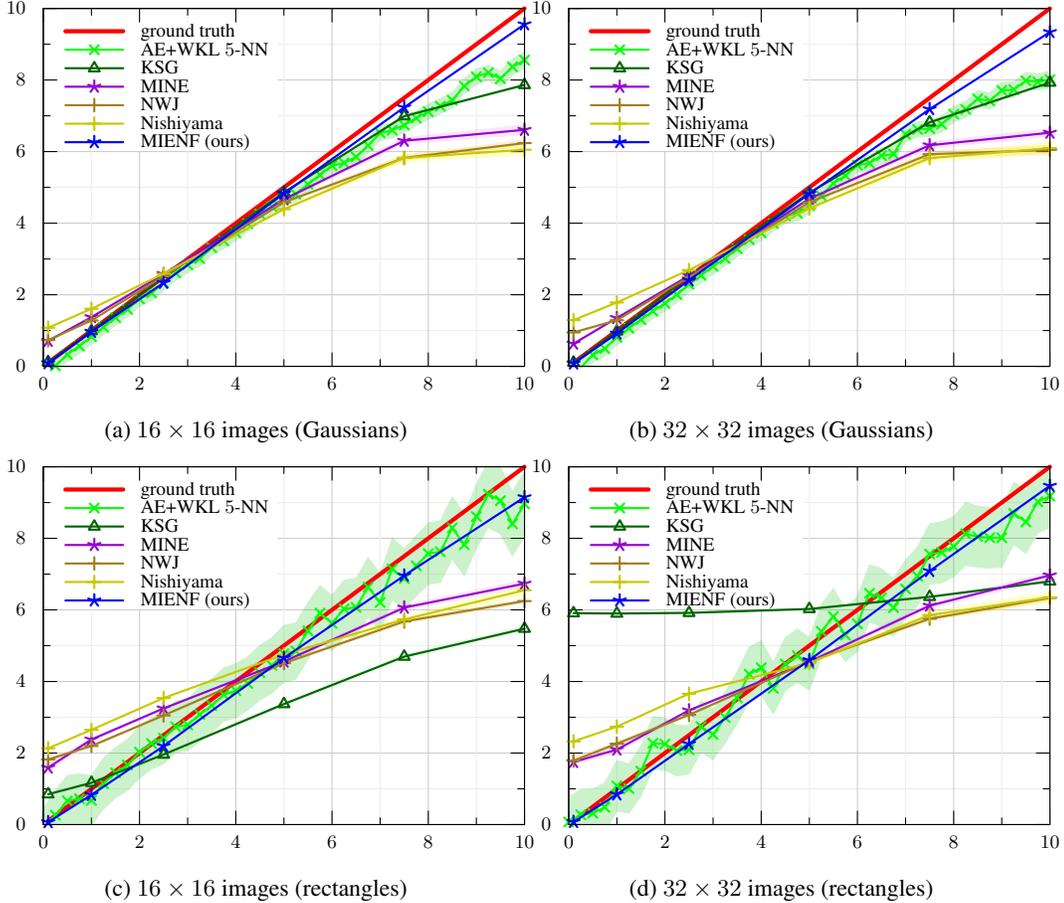

    \centering
    \small
    \begin{subfigure}[t]{0.5\textwidth}
        \centering
        \begin{gnuplot}[terminal=tikz, terminaloptions={color size 7.5cm,5.5cm fontscale 0.8}]
            load "./gnuplot/common.gp"
    
            KL_file_path = "./data/comparison/KL16g.csv"
            KSG_file_path = "./data/comparison/KSG16g.csv"
            NF_file_path = "./data/comparison/NF16g.csv"
            MINE_file_path = "./data/comparison/MINE16g.csv"
            NWJ_file_path = "./data/comparison/NWJ16g.csv"
            Nishiyama_file_path = "./data/comparison/Nishiyama16g.csv"
            
            confidence = 0.999
            max_y = 10.0
    
            load "./gnuplot/estmi_mi.gp"
        \end{gnuplot}
        \vspace{-1.0\baselineskip}
        \caption{$16\times 16$ images (Gaussians)}
    \end{subfigure}%
    \begin{subfigure}[t]{0.5\textwidth}
        \centering
        \begin{gnuplot}[terminal=tikz, terminaloptions={color size 7.5cm,5.5cm fontscale 0.8}]
            load "./gnuplot/common.gp"
    
            KL_file_path = "./data/comparison/KL32g.csv"
            KSG_file_path = "./data/comparison/KSG32g.csv"
            NF_file_path = "./data/comparison/NF32g.csv"
            MINE_file_path = "./data/comparison/MINE32g.csv"
            NWJ_file_path = "./data/comparison/NWJ32g.csv"
            Nishiyama_file_path = "./data/comparison/Nishiyama32g.csv"
            
            confidence = 0.999
            max_y = 10.0
    
            load "./gnuplot/estmi_mi.gp"
        \end{gnuplot}
        \vspace{-1.0\baselineskip}
        \caption{$32\times 32$ images (Gaussians)}
    \end{subfigure}

    \begin{subfigure}[t]{0.5\textwidth}
        \centering
        \begin{gnuplot}[terminal=tikz, terminaloptions={color size 7.5cm,5.5cm fontscale 0.8}]
            load "./gnuplot/common.gp"
    
            KL_file_path = "./data/comparison/KL16r.csv"
            KSG_file_path = "./data/comparison/KSG16r.csv"
            NF_file_path = "./data/comparison/NF16r.csv"
            MINE_file_path = "./data/comparison/MINE16r.csv"
            NWJ_file_path = "./data/comparison/NWJ16r.csv"
            Nishiyama_file_path = "./data/comparison/Nishiyama16r.csv"
            
            confidence = 0.999
            max_y = 10.0
    
            load "./gnuplot/estmi_mi.gp"
        \end{gnuplot}
        \vspace{-1.0\baselineskip}
        \caption{$16\times 16$ images (rectangles)}
    \end{subfigure}%
    \begin{subfigure}[t]{0.5\textwidth}
        \centering
        \begin{gnuplot}[terminal=tikz, terminaloptions={color size 7.5cm,5.5cm fontscale 0.8}]
            load "./gnuplot/common.gp"
    
            KL_file_path = "./data/comparison/KL32r.csv"
            KSG_file_path = "./data/comparison/KSG32r.csv"
            NF_file_path = "./data/comparison/NF32r.csv"
            MINE_file_path = "./data/comparison/MINE32r.csv"
            NWJ_file_path = "./data/comparison/NWJ32r.csv"
            Nishiyama_file_path = "./data/comparison/Nishiyama32r.csv"
            
            confidence = 0.999
            max_y = 10.0
    
            load "./gnuplot/estmi_mi.gp"
        \end{gnuplot}
        \vspace{-1.0\baselineskip}
        \caption{$32\times 32$ images (rectangles)}
    \end{subfigure}
    \caption{Comparison of the selected estimators. Along $ x $ axes is $ I(X;Y) $, along $ y $ axes is $ \hat I(X;Y) $. We plot 99.9\% asymptotic confidence intervals acquired either from the MC integration standard deviation (WKL, KSG) or from the epochwise averaging (other methods, $ 200 $ last epochs). $ 10 \cdot 10^3 $ samples were used.}
    \label{figure:compare_methods_images}
\end{figure*}


For the tests with synthetic images, we use GLOW~\cite{kingma2018GLOW} normalizing flow architecture with $ \log_2(\textnormal{image size}) $ splits, $ 2 $ blocks between splits and $ 16 $ hidden channels in each block, appended with a learnable orthogonal linear layer and a small $ 4 $-layer Real NVP flow~\cite{dinh2017real_NVP}.
For the other tests, we use $ 6 $-layer Real NVP architecture.
For further details (including the architecture of MINE critic network and CNN autoencoder), we refer the reader to~\cref{appendix:technical_details}.

The results of the experiments performed with the high-dimensional synthetic images are provided in~\cref{figure:compare_methods_images}. 
We attribute the good performance of AE+WKL to the fact that the proposed synthetic datasets are easily and almost losslessly compressed via a CNN AE.
We run additional experiments with much simpler, but incompressible data to show that the estimation error of WKL rapidly increases with the dimension.
The results are provided in~\cref{table:bad_WKL}.
In contrast, our method yields reasonable estimates in the same or similar cases presented in~\cref{figure:multidimensional_tests}.

\begin{table}[ht!]
    \caption{Evaluation of $ 5 $-NN weighted Kozachenko-Leonenko estimator on multidimensional uniformly distributed data.
    For each dimension $ d_X = d_Y $, $ 11 $ estimates of MI are acquired with the ground truth ranging from $ 0 $ to $ 10 $ with a fixed step.
    The RMSE is calculated for each set of estimates.}
    \label{table:bad_WKL}
    \begin{center}
    \begin{small}
    \begin{sc}
    \begin{tabular}{ccccccc}
        \toprule
        $ d_{X,Y} $ & $2$ & $4$ & $8$ & $16$ & $32$ & $64$  \\
        \midrule
        RMSE & $2.2$ & $1.0$ &$127.9$ & $227.5$ & $522.4$ & $336.2$  \\
        \bottomrule
    \end{tabular}
    \end{sc}
    \end{small}
    \end{center}
\end{table}

Overall, the proposed estimator performs well during all the experiments,
including the incompressible high-dimensional data, large MI values and non-Gaussian-based tests.

\section{Discussion}
\label{section:discussion}

Information-theoretic analysis of deep neural networks is a novel and developing approach.
As it relies on a well-established theory of information, it potentially can provide fundamental, robust and intuitive results~\cite{shwartz_ziv2017opening_black_box, he2023generalization_bounds}.
Currently, this analysis is complicated due to main information-theoretic qualities~--- \emph{differential entropy} and \emph{mutual information}~--- being hard to measure in the case of high-dimensional data.

We have shown that it is possible to modify the conventional normalizing flows setup to harness all the benefits of
simple and robust closed-form expressions for mutual information.
Non-asymptotic error bounds for both variants of our method are derived, asymptotic variance and consistency analysis is carried out.
We provide useful theoretical and practical insights on using the proposed method effectively.
We have demonstrated the effectiveness of our estimator in various settings, including compressible and incompressible high-dimensional data, high values of mutual information and the data not acquired from the Gaussian distribution via invertible mappings.


Finally, it is worth noting that despite normalizing flows and Gaussian base distributions being used throughout our work,
the proposed method can be extended to any type of base distribution with closed-form expression for mutual information and to any injective generative model. 
For example, a subclass of diffusion models can be considered~\cite{zhang2021diffusion_normalizing_flows, kingma2021density_estimation_with_diffusion_models}.
Injective normalizing flows~\cite{teng2019invertible_AE, brehmer2020flows_manifold_learning} are also compatible with the proposed pipeline.
Gaussian mixtures can also be used as base distributions due to a relatively cheap MI calculation and the universality property~\cite{czyz2023pointwise_MI}.

\paragraph{Limitations} The main limitation of the general method is the ambiguity of $ \mathcal{Q} $ (the family of PDF estimators used to estimate MI in the latent space),
which can can be either rich (yielding a consistent, but possibly expensive estimator), or poor (leading to the inconsistency of the estimate).
However, in~\cite{czyz2023pointwise_MI} it is argued that mixture models can achieve rather good tradeoff between the quality and the cost of a PMI approximation.

The major limitation of $ \normal $-MIENF is that its consistency is proven only for a certain class of distributions: the probability distribution should be equivalent to a Gaussian via a Cartesian product of diffeomorphisms.
However, mathematical simplicity, rigorous bounds, low variance and relative practical success of the estimator suggest that the proposed method achieves a decent tradeoff.

\newpage
\bibliography{references}

\begin{thebibliography}{54}
\providecommand{\natexlab}[1]{#1}
\providecommand{\url}[1]{\texttt{#1}}
\expandafter\ifx\csname urlstyle\endcsname\relax
  \providecommand{\doi}[1]{doi: #1}\else
  \providecommand{\doi}{doi: \begingroup \urlstyle{rm}\Url}\fi

\bibitem[Tishby and Zaslavsky(2015)]{tishby2015bottleneck_principle}
Naftali Tishby and Noga Zaslavsky.
\newblock Deep learning and the information bottleneck principle.
\newblock \emph{2015 IEEE Information Theory Workshop (ITW)}, pages 1--5, 2015.

\bibitem[Xu and Raginsky(2017)]{xu2017IT_analysis}
Aolin Xu and Maxim Raginsky.
\newblock Information-theoretic analysis of generalization capability of
  learning algorithms.
\newblock In I.~Guyon, U.~Von Luxburg, S.~Bengio, H.~Wallach, R.~Fergus,
  S.~Vishwanathan, and R.~Garnett, editors, \emph{Advances in Neural
  Information Processing Systems}, volume~30. Curran Associates, Inc., 2017.
\newblock URL
  \url{https://proceedings.neurips.cc/paper_files/paper/2017/file/ad71c82b22f4f65b9398f76d8be4c615-Paper.pdf}.

\bibitem[Goldfeld et~al.(2019)Goldfeld, van~den Berg, Greenewald, Melnyk,
  Nguyen, Kingsbury, and Polyanskiy]{goldfeld2019estimating_information_flow}
Ziv Goldfeld, Ewout van~den Berg, Kristjan~H. Greenewald, Igor~V. Melnyk,
  Nam~H. Nguyen, Brian Kingsbury, and Yury Polyanskiy.
\newblock Estimating information flow in deep neural networks.
\newblock In \emph{ICML}, 2019.

\bibitem[Steinke and Zakynthinou(2020)]{abernethy2020reasoning_conditional_MI}
Thomas Steinke and Lydia Zakynthinou.
\newblock {R}easoning {A}bout {G}eneralization via {C}onditional {M}utual
  {I}nformation.
\newblock In Jacob Abernethy and Shivani Agarwal, editors, \emph{Proceedings of
  Thirty Third Conference on Learning Theory}, volume 125 of \emph{Proceedings
  of Machine Learning Research}, pages 3437--3452. PMLR, 09--12 Jul 2020.
\newblock URL \url{https://proceedings.mlr.press/v125/steinke20a.html}.

\bibitem[Amjad et~al.(2022)Amjad, Liu, and
  Geiger]{kairen2018individual_neurons}
Rana~Ali Amjad, Kairen Liu, and Bernhard~C. Geiger.
\newblock Understanding neural networks and individual neuron importance via
  information-ordered cumulative ablation.
\newblock \emph{IEEE Transactions on Neural Networks and Learning Systems},
  33\penalty0 (12):\penalty0 7842--7852, 2022.
\newblock \doi{10.1109/TNNLS.2021.3088685}.

\bibitem[Chen et~al.(2016)Chen, Duan, Houthooft, Schulman, Sutskever, and
  Abbeel]{chen2016infogan}
Xi~Chen, Yan Duan, Rein Houthooft, John Schulman, Ilya Sutskever, and Pieter
  Abbeel.
\newblock Infogan: Interpretable representation learning by information
  maximizing generative adversarial nets.
\newblock In Daniel~D. Lee, Masashi Sugiyama, Ulrike von Luxburg, Isabelle
  Guyon, and Roman Garnett, editors, \emph{Advances in Neural Information
  Processing Systems 29: Annual Conference on Neural Information Processing
  Systems 2016, December 5-10, 2016, Barcelona, Spain}, pages 2172--2180, 2016.
\newblock URL
  \url{https://proceedings.neurips.cc/paper/2016/hash/7c9d0b1f96aebd7b5eca8c3edaa19ebb-Abstract.html}.

\bibitem[Belghazi et~al.(2018)Belghazi, Baratin, Rajeshwar, Ozair, Bengio,
  Courville, and Hjelm]{belghazi2018mine}
Mohamed~Ishmael Belghazi, Aristide Baratin, Sai Rajeshwar, Sherjil Ozair,
  Yoshua Bengio, Aaron Courville, and Devon Hjelm.
\newblock Mutual information neural estimation.
\newblock In Jennifer Dy and Andreas Krause, editors, \emph{Proceedings of the
  35th International Conference on Machine Learning}, volume~80 of
  \emph{Proceedings of Machine Learning Research}, pages 531--540. PMLR, 07
  2018.
\newblock URL \url{https://proceedings.mlr.press/v80/belghazi18a.html}.

\bibitem[Ardizzone et~al.(2020)Ardizzone, Mackowiak, Rother, and
  K\"{o}the]{ardizzone2020training_normflows}
Lynton Ardizzone, Radek Mackowiak, Carsten Rother, and Ullrich K\"{o}the.
\newblock Training normalizing flows with the information bottleneck for
  competitive generative classification.
\newblock In H.~Larochelle, M.~Ranzato, R.~Hadsell, M.F. Balcan, and H.~Lin,
  editors, \emph{Advances in Neural Information Processing Systems}, volume~33,
  pages 7828--7840. Curran Associates, Inc., 2020.
\newblock URL
  \url{https://proceedings.neurips.cc/paper_files/paper/2020/file/593906af0d138e69f49d251d3e7cbed0-Paper.pdf}.

\bibitem[Berrett and Samworth(2017)]{berrett2017independence_testing}
Thomas Berrett and Richard Samworth.
\newblock Nonparametric independence testing via mutual information.
\newblock \emph{Biometrika}, 106, 11 2017.
\newblock \doi{10.1093/biomet/asz024}.

\bibitem[Sen et~al.(2017)Sen, Suresh, Shanmugam, Dimakis, and
  Shakkottai]{sen2017conditional_independence_test}
Rajat Sen, Ananda~Theertha Suresh, Karthikeyan Shanmugam, Alexandros~G Dimakis,
  and Sanjay Shakkottai.
\newblock Model-powered conditional independence test.
\newblock In I.~Guyon, U.~Von Luxburg, S.~Bengio, H.~Wallach, R.~Fergus,
  S.~Vishwanathan, and R.~Garnett, editors, \emph{Advances in Neural
  Information Processing Systems}, volume~30. Curran Associates, Inc., 2017.
\newblock URL
  \url{https://proceedings.neurips.cc/paper_files/paper/2017/file/02f039058bd48307e6f653a2005c9dd2-Paper.pdf}.

\bibitem[Duong and
  Nguyen(2023{\natexlab{a}})]{duong2023normflows_for_conditional_independence_testing}
Bao Duong and Thin Nguyen.
\newblock Normalizing flows for conditional independence testing.
\newblock \emph{Knowledge and Information Systems}, 66, 08 2023{\natexlab{a}}.
\newblock \doi{10.1007/s10115-023-01964-w}.

\bibitem[{Goldfeld} et~al.(2020){Goldfeld}, {Greenewald}, {Niles-Weed}, and
  {Polyanskiy}]{goldfeld2020convergence_of_SEM_entropy_estimation}
Z.~{Goldfeld}, K.~{Greenewald}, J.~{Niles-Weed}, and Y.~{Polyanskiy}.
\newblock Convergence of smoothed empirical measures with applications to
  entropy estimation.
\newblock \emph{IEEE Transactions on Information Theory}, 66\penalty0
  (7):\penalty0 4368--4391, 2020.
\newblock \doi{10.1109/TIT.2020.2975480}.

\bibitem[McAllester and Stratos(2020)]{mcallester2020limitations_MI}
David McAllester and Karl Stratos.
\newblock Formal limitations on the measurement of mutual information.
\newblock In Silvia Chiappa and Roberto Calandra, editors, \emph{Proceedings of
  the Twenty Third International Conference on Artificial Intelligence and
  Statistics}, volume 108 of \emph{Proceedings of Machine Learning Research},
  pages 875--884. PMLR, 08 2020.
\newblock URL \url{https://proceedings.mlr.press/v108/mcallester20a.html}.

\bibitem[Czy{\.z} et~al.(2023)Czy{\.z}, Grabowski, Vogt, Beerenwinkel, and
  Marx]{czyz2023beyond_normal}
Pawe{\l} Czy{\.z}, Frederic Grabowski, Julia~E Vogt, Niko Beerenwinkel, and
  Alexander Marx.
\newblock Beyond normal: On the evaluation of mutual information estimators.
\newblock In \emph{Thirty-seventh Conference on Neural Information Processing
  Systems}, 2023.
\newblock URL \url{https://openreview.net/forum?id=25vRtG56YH}.

\bibitem[van~den Oord et~al.(2019)van~den Oord, Li, and
  Vinyals]{oord2019representation_learning_CPC}
Aaron van~den Oord, Yazhe Li, and Oriol Vinyals.
\newblock Representation learning with contrastive predictive coding, 2019.

\bibitem[Song and Ermon(2020)]{song2020understanding_limitations}
Jiaming Song and Stefano Ermon.
\newblock xtanding the limitations of variational mutual information
  estimators.
\newblock In \emph{International Conference on Learning Representations}, 2020.
\newblock URL \url{https://openreview.net/forum?id=B1x62TNtDS}.

\bibitem[Rhodes et~al.(2020)Rhodes, Xu, and Gutmann]{rhodes2020telescoping}
Benjamin Rhodes, Kai Xu, and Michael~U. Gutmann.
\newblock Telescoping density-ratio estimation.
\newblock In H.~Larochelle, M.~Ranzato, R.~Hadsell, M.F. Balcan, and H.~Lin,
  editors, \emph{Advances in Neural Information Processing Systems}, volume~33,
  pages 4905--4916. Curran Associates, Inc., 2020.
\newblock URL
  \url{https://proceedings.neurips.cc/paper_files/paper/2020/file/33d3b157ddc0896addfb22fa2a519097-Paper.pdf}.

\bibitem[Ao and Li(2022)]{Ao_Li_2022entropy_estimation_normflows}
Ziqiao Ao and Jinglai Li.
\newblock Entropy estimation via normalizing flow.
\newblock \emph{Proceedings of the AAAI Conference on Artificial Intelligence},
  36\penalty0 (9):\penalty0 9990--9998, Jun. 2022.
\newblock \doi{10.1609/aaai.v36i9.21237}.
\newblock URL \url{https://ojs.aaai.org/index.php/AAAI/article/view/21237}.

\bibitem[Butakov et~al.(2024)Butakov, Tolmachev, Malanchuk, Neopryatnaya,
  Frolov, and Andreev]{butakov2024lossy_compression}
Ivan Butakov, Alexander Tolmachev, Sofia Malanchuk, Anna Neopryatnaya, Alexey
  Frolov, and Kirill Andreev.
\newblock Information bottleneck analysis of deep neural networks via lossy
  compression.
\newblock In \emph{The Twelfth International Conference on Learning
  Representations}, 2024.
\newblock URL \url{https://openreview.net/forum?id=huGECz8dPp}.

\bibitem[Franzese et~al.(2024)Franzese, BOUNOUA, and
  Michiardi]{franzese2024minde}
Giulio Franzese, Mustapha BOUNOUA, and Pietro Michiardi.
\newblock {MINDE}: Mutual information neural diffusion estimation.
\newblock In \emph{The Twelfth International Conference on Learning
  Representations}, 2024.
\newblock URL \url{https://openreview.net/forum?id=0kWd8SJq8d}.

\bibitem[Tabak and Vanden-Eijnden(2010)]{tabak2010PDF_estimators_LL_ascend}
Esteban~G. Tabak and Eric Vanden-Eijnden.
\newblock {Density estimation by dual ascent of the log-likelihood}.
\newblock \emph{Communications in Mathematical Sciences}, 8\penalty0
  (1):\penalty0 217 -- 233, 2010.

\bibitem[Tabak and Turner(2013)]{tabak2013nonparametric_PDF_estimators}
E.~G. Tabak and Cristina~V. Turner.
\newblock A family of nonparametric density estimation algorithms.
\newblock \emph{Communications on Pure and Applied Mathematics}, 66\penalty0
  (2):\penalty0 145--164, 2013.
\newblock \doi{https://doi.org/10.1002/cpa.21423}.
\newblock URL \url{https://onlinelibrary.wiley.com/doi/abs/10.1002/cpa.21423}.

\bibitem[Dinh et~al.(2015)Dinh, Krueger, and Bengio]{dinh2015NICE}
Laurent Dinh, David Krueger, and Yoshua Bengio.
\newblock {NICE:} non-linear independent components estimation.
\newblock In Yoshua Bengio and Yann LeCun, editors, \emph{3rd International
  Conference on Learning Representations, {ICLR} 2015, San Diego, CA, USA, May
  7-9, 2015, Workshop Track Proceedings}, 2015.
\newblock URL \url{http://arxiv.org/abs/1410.8516}.

\bibitem[Rezende and Mohamed(2015)]{rezende2015variational_inference_normflows}
Danilo~Jimenez Rezende and Shakir Mohamed.
\newblock Variational inference with normalizing flows.
\newblock In Francis~R. Bach and David~M. Blei, editors, \emph{Proceedings of
  the 32nd International Conference on Machine Learning, {ICML} 2015, Lille,
  France, 6-11 July 2015}, volume~37 of \emph{{JMLR} Workshop and Conference
  Proceedings}, pages 1530--1538. JMLR.org, 2015.
\newblock URL \url{http://proceedings.mlr.press/v37/rezende15.html}.

\bibitem[Duong and Nguyen(2023{\natexlab{b}})]{duong2023dine}
Bao Duong and Thin Nguyen.
\newblock Diffeomorphic information neural estimation.
\newblock \emph{Proceedings of the AAAI Conference on Artificial Intelligence},
  37\penalty0 (6):\penalty0 7468--7475, Jun. 2023{\natexlab{b}}.
\newblock \doi{10.1609/aaai.v37i6.25908}.
\newblock URL \url{https://ojs.aaai.org/index.php/AAAI/article/view/25908}.

\bibitem[Spivak(1965)]{spivak1965calculus}
M.~Spivak.
\newblock \emph{Calculus On Manifolds: A Modern Approach To Classical Theorems
  Of Advanced Calculus}.
\newblock Avalon Publishing, 1965.
\newblock ISBN 9780805390216.

\bibitem[Berrett et~al.(2019)Berrett, Samworth, and
  Yuan]{berrett2019efficient_knn_entropy_estimation}
Thomas~B. Berrett, Richard~J. Samworth, and Ming Yuan.
\newblock Efficient multivariate entropy estimation via $k$-nearest neighbour
  distances.
\newblock \emph{Ann. Statist.}, 47\penalty0 (1):\penalty0 288--318, 02 2019.
\newblock \doi{10.1214/18-AOS1688}.
\newblock URL \url{https://doi.org/10.1214/18-AOS1688}.

\bibitem[Butakov et~al.(2021)Butakov, Malanchuk, Neopryatnaya, Tolmachev,
  Andreev, Kruglik, Marshakov, and
  Frolov]{butakov2021high_dimensional_entropy_estimation}
I.~D. Butakov, S.~V. Malanchuk, A.~M. Neopryatnaya, A.~D. Tolmachev, K.~V.
  Andreev, S.~A. Kruglik, E.~A. Marshakov, and A.~A. Frolov.
\newblock High-dimensional dataset entropy estimation via lossy compression.
\newblock \emph{Journal of Communications Technology and Electronics},
  66\penalty0 (6):\penalty0 764--768, 7 2021.
\newblock ISSN 1555-6557.
\newblock \doi{10.1134/S1064226921060061}.
\newblock URL \url{https://doi.org/10.1134/S1064226921060061}.

\bibitem[Greenewald et~al.(2023)Greenewald, Kingsbury, and
  Yu]{kristjan2023smoothed_entropy_PCA}
Kristjan~H. Greenewald, Brian Kingsbury, and Yuancheng Yu.
\newblock High-dimensional smoothed entropy estimation via dimensionality
  reduction.
\newblock In \emph{{IEEE} International Symposium on Information Theory, {ISIT}
  2023, Taipei, Taiwan, June 25-30, 2023}, pages 2613--2618. {IEEE}, 2023.
\newblock \doi{10.1109/ISIT54713.2023.10206641}.
\newblock URL \url{https://doi.org/10.1109/ISIT54713.2023.10206641}.

\bibitem[Adilova et~al.(2023)Adilova, Geiger, and
  Fischer]{adilova2023IP_dropout}
Linara Adilova, Bernhard~C. Geiger, and Asja Fischer.
\newblock Information plane analysis for dropout neural networks, 2023.

\bibitem[Kraskov et~al.(2004)Kraskov, St\"ogbauer, and
  Grassberger]{kraskov2004KSG}
Alexander Kraskov, Harald St\"ogbauer, and Peter Grassberger.
\newblock Estimating mutual information.
\newblock \emph{Phys. Rev. E}, 69:\penalty0 066138, Jun 2004.
\newblock \doi{10.1103/PhysRevE.69.066138}.
\newblock URL \url{https://link.aps.org/doi/10.1103/PhysRevE.69.066138}.

\bibitem[Czyż et~al.(2023)Czyż, Grabowski, Vogt, Beerenwinkel, and
  Marx]{czyz2023pointwise_MI}
Paweł Czyż, Frederic Grabowski, Julia~E. Vogt, Niko Beerenwinkel, and
  Alexander Marx.
\newblock The mixtures and the neural critics: On the pointwise mutual
  information profiles of fine distributions, 2023.

\bibitem[Polyanskiy and Wu(2024)]{polyanskiy2024information_theory}
Y.~Polyanskiy and Y.~Wu.
\newblock \emph{Information Theory: From Coding to Learning}.
\newblock Cambridge University Press, 2024.
\newblock ISBN 9781108832908.
\newblock URL \url{https://books.google.ru/books?id=CySo0AEACAAJ}.

\bibitem[Kobyzev et~al.(2021)Kobyzev, Prince, and
  Brubaker]{kobyzev2021normflows_overview}
I.~Kobyzev, S.~D. Prince, and M.~A. Brubaker.
\newblock Normalizing flows: An introduction and review of current methods.
\newblock \emph{IEEE Transactions on Pattern Analysis \& Machine Intelligence},
  43\penalty0 (11):\penalty0 3964--3979, nov 2021.
\newblock ISSN 1939-3539.
\newblock \doi{10.1109/TPAMI.2020.2992934}.

\bibitem[Cover and Thomas(2006)]{cover2006information_theory}
Thomas~M. Cover and Joy~A. Thomas.
\newblock \emph{Elements of Information Theory (Wiley Series in
  Telecommunications and Signal Processing)}.
\newblock Wiley-Interscience, USA, 2006.

\bibitem[Lancaster(1958)]{lancaster1958bivariate_structure}
H.~O. Lancaster.
\newblock The structure of bivariate distributions.
\newblock \emph{The Annals of Mathematical Statistics}, 29\penalty0
  (3):\penalty0 719--736, 1958.
\newblock ISSN 00034851.
\newblock URL \url{http://www.jstor.org/stable/2237259}.

\bibitem[Hannan(1961)]{hannan1961cca}
E.~J. Hannan.
\newblock The general theory of canonical correlation and its relation to
  functional analysis.
\newblock \emph{Journal of the Australian Mathematical Society}, 2\penalty0
  (2):\penalty0 229–242, 1961.
\newblock \doi{10.1017/S1446788700026707}.

\bibitem[Donsker and Varadhan(1983)]{donsker1983dv}
{M. D.} Donsker and {S. R.S.} Varadhan.
\newblock Asymptotic evaluation of certain markov process expectations for
  large time. iv.
\newblock \emph{Communications on Pure and Applied Mathematics}, 36\penalty0
  (2):\penalty0 183--212, March 1983.
\newblock ISSN 0010-3640.
\newblock \doi{10.1002/cpa.3160360204}.

\bibitem[Nguyen et~al.(2007)Nguyen, Wainwright, and Jordan]{nguyen2007nwj}
XuanLong Nguyen, Martin~J Wainwright, and Michael Jordan.
\newblock Estimating divergence functionals and the likelihood ratio by
  penalized convex risk minimization.
\newblock In J.~Platt, D.~Koller, Y.~Singer, and S.~Roweis, editors,
  \emph{Advances in Neural Information Processing Systems}, volume~20. Curran
  Associates, Inc., 2007.
\newblock URL
  \url{https://proceedings.neurips.cc/paper_files/paper/2007/file/72da7fd6d1302c0a159f6436d01e9eb0-Paper.pdf}.

\bibitem[Nishiyama(2019)]{nishiyama2019new_lower_bound_on_kld}
Tomohiro Nishiyama.
\newblock A new lower bound for kullback-leibler divergence based on
  hammersley-chapman-robbins bound, 2019.

\bibitem[Teng and Choromanska(2019)]{teng2019invertible_AE}
Yunfei Teng and Anna Choromanska.
\newblock Invertible autoencoder for domain adaptation.
\newblock \emph{Computation}, 7\penalty0 (2), 2019.
\newblock ISSN 2079-3197.
\newblock \doi{10.3390/computation7020020}.
\newblock URL \url{https://www.mdpi.com/2079-3197/7/2/20}.

\bibitem[Brehmer and Cranmer(2020)]{brehmer2020flows_manifold_learning}
Johann Brehmer and Kyle Cranmer.
\newblock Flows for simultaneous manifold learning and density estimation.
\newblock In H.~Larochelle, M.~Ranzato, R.~Hadsell, M.F. Balcan, and H.~Lin,
  editors, \emph{Advances in Neural Information Processing Systems}, volume~33,
  pages 442--453. Curran Associates, Inc., 2020.
\newblock URL
  \url{https://proceedings.neurips.cc/paper_files/paper/2020/file/051928341be67dcba03f0e04104d9047-Paper.pdf}.

\bibitem[Poole et~al.(2019)Poole, Ozair, Van Den~Oord, Alemi, and
  Tucker]{poole2019on_variational_bounds_MI}
Ben Poole, Sherjil Ozair, Aaron Van Den~Oord, Alex Alemi, and George Tucker.
\newblock On variational bounds of mutual information.
\newblock In Kamalika Chaudhuri and Ruslan Salakhutdinov, editors,
  \emph{Proceedings of the 36th International Conference on Machine Learning},
  volume~97 of \emph{Proceedings of Machine Learning Research}, pages
  5171--5180. PMLR, 06 2019.
\newblock URL \url{https://proceedings.mlr.press/v97/poole19a.html}.

\bibitem[Kozachenko and Leonenko(1987)]{kozachenko1987entropy_of_random_vector}
L.~F. Kozachenko and N.~N. Leonenko.
\newblock Sample estimate of the entropy of a random vector.
\newblock \emph{Problems Inform. Transmission}, 23:\penalty0 95--101, 1987.

\bibitem[Kingma and Dhariwal(2018)]{kingma2018GLOW}
Durk~P Kingma and Prafulla Dhariwal.
\newblock Glow: Generative flow with invertible 1x1 convolutions.
\newblock In S.~Bengio, H.~Wallach, H.~Larochelle, K.~Grauman, N.~Cesa-Bianchi,
  and R.~Garnett, editors, \emph{Advances in Neural Information Processing
  Systems}, volume~31. Curran Associates, Inc., 2018.
\newblock URL
  \url{https://proceedings.neurips.cc/paper_files/paper/2018/file/d139db6a236200b21cc7f752979132d0-Paper.pdf}.

\bibitem[Dinh et~al.(2017)Dinh, Sohl-Dickstein, and Bengio]{dinh2017real_NVP}
Laurent Dinh, Jascha Sohl-Dickstein, and Samy Bengio.
\newblock Density estimation using real {NVP}.
\newblock In \emph{International Conference on Learning Representations}, 2017.
\newblock URL \url{https://openreview.net/forum?id=HkpbnH9lx}.

\bibitem[Shwartz-Ziv and Tishby(2017)]{shwartz_ziv2017opening_black_box}
Ravid Shwartz-Ziv and Naftali Tishby.
\newblock Opening the black box of deep neural networks via information, 2017.

\bibitem[He et~al.(2023)He, Yu, and Goldfeld]{he2023generalization_bounds}
Haiyun He, Christina Yu, and Ziv Goldfeld.
\newblock Information-theoretic generalization bounds for deep neural networks.
\newblock In \emph{NeurIPS 2023 workshop: Information-Theoretic Principles in
  Cognitive Systems}, 2023.
\newblock URL \url{https://openreview.net/forum?id=udEjq72DFO}.

\bibitem[Zhang and Chen(2021)]{zhang2021diffusion_normalizing_flows}
Qinsheng Zhang and Yongxin Chen.
\newblock Diffusion normalizing flow.
\newblock In M.~Ranzato, A.~Beygelzimer, Y.~Dauphin, P.S. Liang, and J.~Wortman
  Vaughan, editors, \emph{Advances in Neural Information Processing Systems},
  volume~34, pages 16280--16291. Curran Associates, Inc., 2021.
\newblock URL
  \url{https://proceedings.neurips.cc/paper_files/paper/2021/file/876f1f9954de0aa402d91bb988d12cd4-Paper.pdf}.

\bibitem[Kingma et~al.(2021)Kingma, Salimans, Poole, and
  Ho]{kingma2021density_estimation_with_diffusion_models}
Diederik~P Kingma, Tim Salimans, Ben Poole, and Jonathan Ho.
\newblock On density estimation with diffusion models.
\newblock In A.~Beygelzimer, Y.~Dauphin, P.~Liang, and J.~Wortman Vaughan,
  editors, \emph{Advances in Neural Information Processing Systems}, 2021.
\newblock URL \url{https://openreview.net/forum?id=2LdBqxc1Yv}.

\bibitem[Cai et~al.(2015)Cai, Liang, and Zhou]{cai2015logdet}
T.~Tony Cai, Tengyuan Liang, and Harrison~H. Zhou.
\newblock Law of log determinant of sample covariance matrix and optimal
  estimation of differential entropy for high-dimensional gaussian
  distributions.
\newblock \emph{Journal of Multivariate Analysis}, 137:\penalty0 161--172,
  2015.
\newblock ISSN 0047-259X.
\newblock \doi{https://doi.org/10.1016/j.jmva.2015.02.003}.
\newblock URL
  \url{https://www.sciencedirect.com/science/article/pii/S0047259X1500038X}.

\bibitem[Arellano-Valle et~al.(2013)Arellano-Valle, Contreras-Reyes, and
  Genton]{arellano_valle2013MI_for_skew_distributions}
R.~Arellano-Valle, Javier Contreras-Reyes, and Marc Genton.
\newblock Shannon entropy and mutual information for multivariate
  skew-elliptical distributions.
\newblock \emph{Scandinavian Journal of Statistics}, 40:\penalty0 42--62, 03
  2013.
\newblock \doi{10.1111/j.1467-9469.2011.}

\bibitem[Stimper et~al.(2023)Stimper, Liu, Campbell, Berenz, Ryll, Schölkopf,
  and Hernández-Lobato]{stimper2023normflows}
Vincent Stimper, David Liu, Andrew Campbell, Vincent Berenz, Lukas Ryll,
  Bernhard Schölkopf, and José~Miguel Hernández-Lobato.
\newblock normflows: A pytorch package for normalizing flows.
\newblock \emph{Journal of Open Source Software}, 8\penalty0 (86):\penalty0
  5361, 2023.
\newblock \doi{10.21105/joss.05361}.
\newblock URL \url{https://doi.org/10.21105/joss.05361}.

\bibitem[Kingma and Ba(2017)]{kingma2017adam}
Diederik~P. Kingma and Jimmy Ba.
\newblock Adam: A method for stochastic optimization, 2017.

\end{thebibliography}

\newpage
\appendix

\section{Complete proofs}
\label{appendix:proofs}

\newcommand*{\INAPPENDIX}{}

\printProofs

\section{Non-Gaussian-based tests}
\label{appendix:non_gaussian_based_tests}

As our estimator is based on Gaussianization,
it seems natural that we observe good performance in the experiments with synthetic data acquired from the correlated Gaussian vectors via invertible transformations.
Possible bias towards such data can not be discriminated via the \emph{independency} and \emph{self-consistency} tests, and hard to discriminate via the \emph{data-processing} test proposed in~\cite{czyz2023beyond_normal, franzese2024minde} for the following reasons:

\begin{enumerate}
    \item
        \emph{Independency} test requires $ \hat I(X;Y) \approx 0 $ for independent $ X $ and $ Y $.
        In such case, as $ \cov(f_X(X), f_Y(Y)) = 0 $ for any measurable $ f_X $ and $ f_Y $,
        $ \hat I_{\textnormal{MIENF}}(X;Y) \approx 0 $ in any meaningful scenario (no overfitting, no ill-posed transformations),
        regardless of the marginal distributions of $ X $ and $ Y $.
    \item
        \emph{Self-consistency} test requires $ \hat I(X;Y) \approx \hat I(g(X);Y) $ for $ X $, $ Y $ and $ g $ satisfying \cref{theorem:MI_under_nonsingular_mappings}.
        In our setup, this test only measures the ability of normalizing flows to invert $ g $,
        and provides no information about the quality of $ \hat I(X;Y) $ and $ \hat I(g(X);Y) $.
        
        Moreover, as we leverage~\cref{algorithm:dataset_generation_and_estimator_evaluation} with the Gaussian base distribution for the dataset generation, we somewhat test our estimator for the self-consistency.
    \item
        \emph{Data-processing} test leverages the \emph{data processing inequality}~\cite{cover2006information_theory} via requiring $ \hat I(X;Y) \geq \hat I(g(X);Y) $ for any $ X $, $ Y $ and measurable $ g $.
        Theoretically, this test may highlight the bias of our estimator towards binormalizable data.
        However, this requires constructing $ X $, $ Y $ and $ g $,
        so $ X $ and $ Y $ are not binormalizable, $ g(X) $ and $ Y $ are and $ \hat I(X;Y) < \hat I(g(X);Y) $,
        which seems challenging to achieve.
\end{enumerate}

That is why we use two additional, non-Gaussian-based families of distributions with known closed-form expressions for MI and easy sampling procedures: \emph{multivariate Student distribution}~\cite{arellano_valle2013MI_for_skew_distributions} and \emph{smoothed uniform distribution}~\cite{czyz2023beyond_normal}.

In the following subsections, we provide additional information about the distributions,
closed-form expressions for MI and sampling procedures.

\subsection{Multivariate Student distribution}
\label{subsection:multivariate_Student_distribution}


Consider $ (n+m) $-dimensional $ (\tilde X; \tilde Y) \sim \normal(0, \Sigma) $, where $ \Sigma $ is selected to achieve $ I(\tilde X; \tilde Y) = \varkappa > 0 $.
Firstly, a correction term is calculated in accordance to the following formula:
\[
    c(k, n, m) = f(k) + f(k + n + m) - f(k + n) - f(k + m),
    \quad
    f(x) = \log \Gamma \left( \frac{x}{2} \right) - \frac{x}{2} \psi \left( \frac{x}{2} \right),
\]
where $ k $ is the number of degrees of freedom, $ \psi $ is the digamma function.
Secondly, $ X = \tilde X / \sqrt{k / U} $ and $ Y = \tilde Y / \sqrt{k / U} $ are defined, where $ U \sim \chi_k^2 $.
The resulting vectors are distributed according to the multivariate Student distribution with $ k $ degrees of freedom.
According to~\cite{arellano_valle2013MI_for_skew_distributions}, $ I(X;Y) = \varkappa + c(k,n,m) $.
During the generation, $ \varkappa $ is set to $ I(X;Y) - c(k,n,m) $ to achieve the desired value of $ I(X;Y) $.

Note that $ I(X;Y) \neq 0 $ even in the case of independent $ \tilde X $ and $ \tilde Y $,
as some information between $ X $ and $ Y $ is shared via the magnitude.

\subsection{Smoothed uniform distribution}
\label{subsection:smoothed_uniform_distribution}

\begin{lemma}
    \label{lemma:closed_form_MI_expression_smoothed_uniform}
    Consider independent $ X \sim \uniform[0;1] $, $ Z \sim \uniform[-\varepsilon; \varepsilon] $ and $ Y = X + Z $.
    Then
    \begin{equation}
        \label{eq:smoothed_uniform_mutual_information}
        I(X; Y) =
        \begin{dcases}
            \varepsilon - \log (2 \varepsilon), &\;\; \varepsilon < 1/2 \\
            (4 \varepsilon)^{-1} &\;\; \varepsilon \geq 1/2
        \end{dcases}
    \end{equation}
\end{lemma}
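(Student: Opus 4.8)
The plan is to compute the mutual information via the decomposition $I(X;Y) = h(Y) - h(Y \mid X)$ from \eqref{eq:mutual_information_from_conditional_entropy}, since both terms are elementary here. The conditional entropy is immediate: given $X = x$, the variable $Y = x + Z$ is uniform on the interval $[x - \varepsilon, x + \varepsilon]$ of length $2\varepsilon$, so $h(Y \mid X = x) = \log(2\varepsilon)$ independently of $x$, whence $h(Y \mid X) = \log(2\varepsilon)$. The whole task therefore reduces to evaluating the marginal entropy $h(Y)$.

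Next I would obtain the density of $Y$ as the convolution of the densities of $X$ and $Z$:
\[
    \PDF_Y(y) = \frac{1}{2\varepsilon} \, \bigl| [0,1] \cap [y - \varepsilon, y + \varepsilon] \bigr|,
\]
a trapezoidal profile supported on $[-\varepsilon, 1 + \varepsilon]$. The shape depends on the ordering of the breakpoints $-\varepsilon, \varepsilon, 1 - \varepsilon, 1 + \varepsilon$, which is exactly what forces the case split in the statement: for $\varepsilon < 1/2$ one has $\varepsilon < 1 - \varepsilon$, giving two linear ramps joined by a plateau at height $1$; for $\varepsilon \geq 1/2$ one has $1 - \varepsilon \leq \varepsilon$, giving two ramps joined by a plateau at the lower height $\tfrac{1}{2\varepsilon}$.

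The computation of $h(Y) = -\int \PDF_Y \log \PDF_Y$ then splits by region. On each linear ramp the substitution reducing $\PDF_Y$ to a multiple of $u \mapsto u$ on $[0,1]$ makes everything depend on the single integral $\int_0^1 u \log u \, du = -\tfrac{1}{4}$; the plateau contributes $-(\text{length}) \cdot \tfrac{1}{2\varepsilon}\log\tfrac{1}{2\varepsilon}$, which vanishes when the height equals $1$, i.e.\ in the first case. Carrying out these integrals gives $h(Y) = \varepsilon$ when $\varepsilon < 1/2$ and $h(Y) = \tfrac{1}{4\varepsilon} + \log(2\varepsilon)$ when $\varepsilon \geq 1/2$. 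Subtracting $h(Y \mid X) = \log(2\varepsilon)$ yields $\varepsilon - \log(2\varepsilon)$ and $\tfrac{1}{4\varepsilon}$ respectively, matching \eqref{eq:smoothed_uniform_mutual_information}.

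The only real care needed is the bookkeeping in the case split — getting the breakpoint ordering right and tracking the plateau height — but no step is analytically hard, since the entire entropy reduces to the elementary moment $\int_0^1 u \log u \, du$. A useful sanity check is continuity at $\varepsilon = 1/2$, where both branches give $\tfrac{1}{2} - \log 1 = \tfrac{1}{2} = \tfrac{1}{4 \cdot (1/2)}$, confirming that the two cases agree at the boundary.
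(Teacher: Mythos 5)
Your proposal is correct and follows essentially the same route as the paper's proof: both compute the trapezoidal convolution density of $Y$ with the same case split at $\varepsilon = 1/2$, obtain $h(Y \mid X) = \log(2\varepsilon)$ from independence, evaluate $h(Y)$ to get $\varepsilon$ and $(4\varepsilon)^{-1} + \log(2\varepsilon)$ in the two cases, and substitute into \eqref{eq:mutual_information_from_conditional_entropy}. The continuity check at $\varepsilon = 1/2$ is a nice addition not present in the paper.
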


\begin{proof}
    Probability density function of $ Y $ (two cases):
    \[
        (\varepsilon < 1/2): \qquad
        \PDF_Y(y) = (\PDF_X * \PDF_Z)(y) =
        \begin{cases}
            0, &\; y < -\varepsilon \vee y \geq 1 + \varepsilon \\
            \frac{y + \varepsilon}{2\varepsilon}, & \; -\varepsilon \leq y < \varepsilon \\
            1, & \; \varepsilon \leq y < 1 - \varepsilon \\
            \frac{1 + \varepsilon - y}{2\varepsilon}, & \; 1 - \varepsilon \leq y < 1 + \varepsilon \\
        \end{cases}
    \]
    \[
        (\varepsilon \geq 1/2): \qquad
        \PDF_Y(y) = (\PDF_X * \PDF_Z)(y) =
        \begin{cases}
            0, &\; y < -\varepsilon \vee y \geq 1 + \varepsilon \\
            \frac{y + \varepsilon}{2\varepsilon}, & \; -\varepsilon \leq y < 1 - \varepsilon \\
            \frac{1}{2\varepsilon}, & \; 1 - \varepsilon \leq y < \varepsilon \\
            \frac{1 + \varepsilon - y}{2\varepsilon}, & \; \varepsilon \leq y < 1 + \varepsilon \\
        \end{cases}
    \]
    Differential entropy of a uniformly distributed random variable:
    \[
        h(U[a;b]) = \log(b - a)
    \]
    Conditional differential entropy of $ Y $ with respect to $ X $:
    \[
        h(Y \mid X) = \expect_{x \sim X} h(Y \mid X = x) = \expect_{x \sim X} h(Z + x \mid X = x)
    \]
    As $ X $ and $ Z $ are independent,
    \begin{equation}
        \label{eq:smoothed_uniform_Y_conditional_entropy}
        \expect_{x \sim X} h(Z + x \mid X = x) = \expect_{x \sim X} h(Z + x) = \int\limits_0^1 \log (2 \varepsilon) \, dx = \log (2 \varepsilon)
    \end{equation}
    Differential entropy of $ Y $:
    \begin{equation}
        \label{eq:smoothed_uniform_Y_entropy}
        h(Y) = - \int\limits_{-\infty}^\infty \PDF_Y(y) \, dy =
        \begin{cases}
            \varepsilon, &\;\; \varepsilon < 1/2 \\
            (4\varepsilon)^{-1} + \log(2 \varepsilon), &\;\; \varepsilon \geq 1/2
        \end{cases}
    \end{equation}
    The final result is acquired via substituting~\eqref{eq:smoothed_uniform_Y_conditional_entropy} and~\eqref{eq:smoothed_uniform_Y_entropy} into~\eqref{eq:mutual_information_from_conditional_entropy}.
\end{proof}

Equation~\eqref{eq:smoothed_uniform_mutual_information} can be inverted:
\begin{equation}
    \label{eq:smoothed_uniform_mutual_information_inverted}
    \varepsilon =
    \begin{cases}
        -W\left[ -\frac{1}{2}\exp(-I(X;Y))\right], &\;\; I(X;Y) < 1/2 \\
        (4 \cdot I(X;Y))^{-1}, &\;\; I(X;Y) \geq 1/2
    \end{cases},
\end{equation}
where $ W $ is the product logarithm function.

\section{Technical details}
\label{appendix:technical_details}

In this section, we describe the technical details of our experimental setup:
architecture of the neural networks, hyperparameters, etc.

For the tests described in Section~\ref{section:experiments}, we use architectures listed in Table~\ref{table:synthetic_architecture}.
For the flow models, we use the \texttt{normflows} package~\cite{stimper2023normflows}.
The autoencoders are trained via Adam~\cite{kingma2017adam} optimizer
on $ 5 \cdot 10^3 $ images with a batch size $ 5 \cdot 10^3 $,
a learning rate $ 10^{-3} $ and MAE loss for $ 2 \cdot 10^3 $ epochs.
The MINE/NWJ/Nishiyama critic network is trained via the Adam optimizer
on $ 5 \cdot 10^3 $ pairs of images with a batch size $ 512 $,
a learning rate $ 10^{-3} $ for $ 5 \cdot 10^3 $ epochs.
The GLOW normalizing flow is trained via the Adam optimizer
on $ 10 \cdot 10^3 $ images with a batch size $ 1024 $,
a learning rate decaying from $ 5 \cdot 10^{-4} $ to $ 1 \cdot 10^{-5} $ for $ 2 \cdot 10^3 $ epochs.
Nvidia Titan RTX is used to train the models.
In any setup, each experiment took no longer than one hour to be completed.

\begin{table}[ht!]
    \center
    \caption{The NN architectures used to conduct the tests in Section~\ref{section:experiments}.}
    \label{table:synthetic_architecture}
    \begin{tabular}{cc}
    \toprule
    NN & Architecture \\
    \midrule
    \makecell{AEs,\\ $ 16 \times 16 $ ($ 32 \times 32 $) \\ images} &
        \small
        \begin{tabular}{rl}
            $ \times 1 $: & Conv2d(1, 4, ks=3), BatchNorm2d, LeakyReLU(0.2), MaxPool2d(2) \\
            $ \times 1 $: & Conv2d(4, 8, ks=3), BatchNorm2d, LeakyReLU(0.2), MaxPool2d(2) \\
            $ \times 2(3) $: & Conv2d(8, 8, ks=3), BatchNorm2d, LeakyReLU(0.2), MaxPool2d(2) \\
            $ \times 1 $: & Dense(8, dim), Tanh, Dense(dim, 8), LeakyReLU(0.2) \\
            $ \times 2(3) $: & Upsample(2), Conv2d(8, 8, ks=3), BatchNorm2d, LeakyReLU(0.2) \\
            $ \times 1 $: & Upsample(2), Conv2d(8, 4, ks=3), BatchNorm2d, LeakyReLU(0.2) \\
            $ \times 1 $: & Conv2d(4, 1, ks=3), BatchNorm2d, LeakyReLU(0.2) \\
        \end{tabular} \\
        \\
    \makecell{MINE, critic NN, \\ $ 16 \times 16 $ ($ 32 \times 32 $) \\ images} &
        \small
        \begin{tabular}{rl}
            $ \times 1 $: & [Conv2d(1, 16, ks=3), MaxPool2d(2), LeakyReLU(0.01)]$^{ \times 2 \; \text{in parallel}} $ \\
            $ \times 1(2) $: & [Conv2d(16, 16, ks=3), MaxPool2d(2), LeakyReLU(0.01)]$^{ \times 2 \; \text{in parallel}} $ \\
            $ \times 1 $: & Dense(256, 128), LeakyReLU(0.01) \\
            $ \times 1 $: & Dense(128, 128), LeakyReLU(0.01) \\
            $ \times 1 $: & Dense(128, 1) \\
        \end{tabular} \\ \\
    \makecell{GLOW, \\ $ 16 \times 16 $ ($ 32 \times 32 $) \\ images} &
    \small
        \begin{tabular}{rl}
            $ \times 1 $: & \makecell[l]{$ 4 $ ($ 5 $) splits, $ 2 $ GLOW blocks between splits, \\ $ 16 $ hidden channels in each block, leaky constant $ = 0.01 $} \\
            $ \times 1 $: & Orthogonal linear layer \\
            $ \times 4 $: & RealNVP(AffineCouplingBlock(MLP($d / 2 $, $ 32 $, $ d $)), Permute-swap) \\
        \end{tabular} \\ \\
    \makecell{RealNVP, \\ $ d $-dimensional data} &
    \small
        \begin{tabular}{rl}
            $ \times 6 $: & RealNVP(AffineCouplingBlock(MLP($d / 2 $, $ 64 $, $ d $)), Permute-swap) \\
        \end{tabular} \\
    \bottomrule
    \end{tabular}
\end{table}

Here we do not explicitly define $ g_\xi $ and $ g_\eta $
used in the tests with synthetic data,
as these functions smoothly map low-dimensional vectors to high-dimensional images and, thus, are very complex.
A Python implementation of the functions in question is available in the supplementary material,
see the directory \texttt{mutinfo/source/python/mutinfo/distributions/images}.

\end{document}